\theoremstyle{remark}
\newtheorem{remark}{Remark}
  \providecommand\BibTeX{{%
    \normalfont B\kern-0.5em{\scshape i\kern-0.25em b}\kern-0.8em\TeX}}}
\begin{document}

\title{
Permutation Equivariant 
Model-based Offline Reinforcement Learning for Auto-bidding}

\author{Zhiyu Mou}
\email{mouzhiyu.mzy@taobao.com}
\affiliation{%
  \institution{Alibaba Group}
  \city{Beijing}
  \country{China}
}

\author{Miao Xu}
\email{xumiao.xm@taobao.com}
\affiliation{%
  \institution{Alibaba Group}
  \city{Beijing}
  \country{China}}

\author{Wei Chen}
\email{ganhai.cw@taobao.com}
\affiliation{%
  \institution{Alibaba Group}
  \city{Beijing}
  \country{China}
}

\author{Rongquan Bai}
\email{rongquan.br@taobao.com}
\affiliation{%
 \institution{Alibaba Group}
 \city{Beijing}
 \country{China}
 }

\author{Chuan Yu}
\email{yuchuan.yc@alibaba-inc.com}
\affiliation{%
  \institution{Alibaba Group}
  \city{Beijing}
  \country{China}}

\author{Jian Xu}
\email{xiyu.xj@alibaba-inc.com}
\affiliation{%
  \institution{Alibaba Group}
  \city{Beijing}
  \country{China}
}

\renewcommand{\shortauthors}{Trovato and Tobin, et al.}

\begin{abstract}
 Recently, reinforcement learning (RL)-based auto-bidding technique has experienced a paradigm shift: from training policies with the \emph{imaginary data} generated by a simplistic offline simulator of the online advertising system, which we term the \emph{Simulation-based RL Bidding} (SRLB), to training policies with
 a fixed set of \emph{real data} collected directly from the online advertising system with offline RL methods, which we term the \emph{Offline RL Bidding} (ORLB).
 However, the trajectories of the policies trained by the ORLB are often restricted to the vicinity of the real data, and
 as the real data typically covers a limited state space, the trained policy exhibits only modest performance improvements.
 Although the imaginary data in the SRLB can expand the covered state space, it comes with a risk: it can seriously mislead the policy due to the significant gap between the simplistic offline simulator and the online advertising system. In this paper, we introduce a \emph{Model-based RL Bidding} (MRLB) paradigm that holds promise for addressing the challenges present in the SRLB and the ORLB. Specifically, the MRLB first learns an environment model through supervised learning based on real data to serve as the offline simulator, which can bridge the gap encountered in the SRLB. Then the MRLB trains the policy with both real and imaginary data, which can expand the covered state space compared to that of the ORLB. 
 As the performance of the MRLB heavily relies on the accuracy of the environment model, we propose two novel designs to further improve the reliability of the environment model for policy training.
 Firstly,
 we design the environment model as a \emph{permutation equivariant} neural network to enhance its generalization ability with theoretical guarantees. 
 Secondly, we design a robust offline Q learning method that pessimistically penalizes the reward and state predictions of the environment model to alleviate the negative impact of potential overestimations on policy training.
 These two designs constitute a specific algorithm based on the MRLB, named \emph{Permutation Equivariant Model-based Offline RL} (PE-MORL).
 Extensive real-world experiments validate the superiority of the proposed PE-MORL algorithm compared to the state-of-the-art auto-bidding algorithms.
\end{abstract}

\begin{CCSXML}
<ccs2012>
   <concept>
       <concept_id>10002951.10003227.10003447</concept_id>
       <concept_desc>Information systems~Computational advertising</concept_desc>
       <concept_significance>500</concept_significance>
       </concept>
   <concept>
       <concept_id>10003752.10010070.10010071.10010261</concept_id>
       <concept_desc>Theory of computation~Reinforcement learning</concept_desc>
       <concept_significance>500</concept_significance>
       </concept>
 </ccs2012>
\end{CCSXML}

\ccsdesc[500]{Information systems~Computational advertising}
\ccsdesc[500]{Theory of computation~Reinforcement learning}

\keywords{Auto-bidding, Reinforcement Learning, Permutation Equivariance}


\received{20 February 2007}
\received[revised]{12 March 2009}
\received[accepted]{5 June 2009}

\maketitle

\section{Introduction}
\label{sec:intro}

Auto-bidding has become a popular tool to bid for impression opportunities on behalf of advertisers in online advertising systems.
As auto-bidding inherently involves the sequential decision-making problem, reinforcement learning (RL) \cite{sutton1999policy} is playing an increasingly important role in acquiring outstanding auto-bidding policies.
In recent years, the RL-based auto-bidding technique has experienced a paradigm shift in policy training. The initial paradigm mainly focuses on training auto-bidding policies in a simplistic,
manually-constructed offline simulator of the online advertising system, which we term the \emph{Simulation-based RL Bidding} (SRLB) paradigm \cite{he2021unified,hao2020dynamic,wu2018budget,zhao2018deep,jin2018real,wen2022cooperative}, since directly training in the online advertising system is typically not allowed due to safety concerns.
However, due to the commonly existing gap between the offline simulator and the online advertising system, the data generated by the offline simulator --- referred to as the \emph{imaginary data} --- is typically biased, potentially resulting in sub-optimal or even poor auto-bidding policies in the online advertising system. 
This is known as the \emph{inconsistencies between online and offline} (IBOO) problem \cite{mou2022sustainable}. 
To tackle this problem, the current focus has shifted away from training policies with the offline simulator to directly using the fixed dataset collected from the online advertising system --- referred to as the \emph{real data} --- to train the auto-bidding policy with model-free offline RL methods, which we term the \emph{Model-free Offline RL Bidding} (ORLB) paradigm \cite{kumar2020conservative,ashvin2020accelerating,mou2022sustainable,korenkevych2023offline}.
However, the trajectories of the auto-bidding policy trained by the ORLB are often constrained to the vicinity of the real data, typically covering a limited state space \cite{mou2022sustainable}, which only results in modest improvements in the performance of the auto-bidding policy.

Essentially, both paradigms fail to fully exploit the information that could be useful for training the auto-bidding policy.
Specifically, (1) for ORLB: the information loss occurs in the portions of the state space that are not covered by the real data. 
Actually, this missing information can be supplemented by generating additional data through further interactions with an offline simulator;
(2) for SRLB: although the imaginary data generated by the offline simulator can expand the covered state space, existing offline simulators are generally constructed with the simplistic General Second Principle (GSP) rule. This approach overlooks much of the useful and available information about the bidding environment that could be contained in the real data (or could be derived from professional expertise), and as previously mentioned, these offline simulators
exhibit serious IBOO problems. 
In fact, we can employ more effective methods to construct the offline simulator with more useful information, which can largely mitigate the IBOO problem. 

Based on these analyses, here we introduce the \emph{Model-based RL Bidding} (MRLB) paradigm to the auto-bidding field, as described in Algorithm \ref{alg:mrlb}. The MRLB paradigm is mainly based on the single-agent model-based RL algorithm, which has been proven to be effective in many other tasks, such as robotics \cite{polydoros2017survey} and video games \cite{kaiser2019model}.
Specifically, the MRLB first learns an environment model through supervised learning based on real data, which serves as the  
offline simulator. 
This can increase the amount of information contained in the offline simulator compared to the GSP-based offline simulator in the SRLB and largely mitigate the IBOO problem.
Then the MRLB trains the auto-bidding policy with both the real data and the imaginary data, which expands the covered state space compared to that of the ORLB. 
Hence, the MRLB acts as a promising paradigm to compensate for the information losses in both the SRLB and the ORLB. 

However, the vanilla MRLB algorithm that directly follows Algorithm \ref{alg:mrlb} without any particular designs is still not enough to train outstanding auto-bidding policies due to the prohibition of additional interactions with online advertising systems during training and the fixed nature of the real data.
Specifically, we note that the performance of the MRLB heavily relies on the accuracy of the environment model.
Since the fixed real data typically covers a limited state space \cite{mou2022sustainable}, the environment model may be inaccurate for states outside the real data. 
 Such inaccuracies, particularly in overestimating rewards, can mislead the policy and significantly impair the policy's performance.
This is typically known as the \emph{Out-Of-Distribution} (OOD) challenge \cite{korenkevych2023offline,mou2022sustainable,yu2020mopo}.

In this paper, we propose a specific algorithm of the MRLB paradigm, named the \emph{Permutation Equivariant Model-based Offline RL} (PE-MORL), to address the OOD challenge. 
The PE-MORL features two novel designs compared to the vanilla MRLB algorithm.
\begin{itemize}
	\item Firstly, 
	we design the environment model as a \emph{permutation equivariant} model, which can have better generalization ability and accuracy both theoretically and empirically. 
	Specifically, 
	rather than only using the features of a single agent (i.e., advertiser in this paper) as inputs,  which is a typical design in single-agent model-based RL algorithms, our environment model incorporates both the states and actions of the considered advertiser and those of all the competing advertisers as inputs, and outputs the predictions on the next states of all advertisers and the reward of the considered advertiser.
	Note that the incorporation of all advertisers' information  can naturally enhance the generalization ability of the environment model. 
	The "\emph{permutation equivariant}" means the property that if the order of advertisers in the inputs to the environment model changes, then the output values will remain the same but their orders will permute accordingly. An insight behind this design is that the order of inputs does not alter the information they contain, and therefore, only the order of the output values should change accordingly \cite{lee2019set}.
	Theoretically, we can prove that making the environment model a permutation equivariant function can increase its generalization ability. 
	\item 
	Secondly, we design a robust offline Q learning method to deal with the uncertainty of the states and actions the environment predicts, which can further address the OOD challenge.
	Specifically, in RL algorithms, the vanilla Q loss function is only a Bellman error term (as introduced in Section \ref{sec:pre}), which can be easily influenced by the overestimation of the environment model, making the trained auto-bidding policy improperly deviate from the real data.
In the proposed robust offline Q learning method, we take the minimum Q value of the states predicted by the environment model and penalize the predicted reward based on the uncertainty of the environment model to robustly deal with the overestimations.
\end{itemize}
Furthermore, extensive simulated and real-world experiments validate the superiority of the proposed PE-MORL algorithm compared with the state-of-the-art auto-bidding policies.
	Particularly, we validate that the designed environment model exhibits improved generalization ability compared to a conventional neural network, which is not permutation equivariant, and to the GSP-based offline simulator commonly used in SRLB.




\begin{algorithm}[t]
	\caption{MRLB Paradigm}
	\label{alg:mrlb}

	\begin{algorithmic}[1]
        \REQUIRE Real data $\mathcal{D}_\text{R}$
        \ENSURE Auto-bidding policy $\hat{\pi}$
\STATE Initialize the imaginary data $\mathcal{D}_\text{I}\leftarrow\emptyset$ and the auto-bidding policy $\hat{\pi}$;
	\STATE Learn an environment model $\hat{M}$ based on $\mathcal{D}_\text{R}$ with supervised learning;
 \REPEAT
 \STATE Interact with $\hat{M}$ using $\hat{\pi}$ to generate imaginary data and store it in $\mathcal{D}_\text{I}$;
 \STATE Update $\hat{\pi}$ with $\mathcal{D}_\text{R}\cup\mathcal{D}_\text{I}$;
 \UNTIL{The expected return of $\hat{\pi}$ in $\hat{M}$ converges.}
	\end{algorithmic}
\end{algorithm}

\section{Related Works}
We briefly review the existing algorithms based on the MRLB paradigms in the following, and the complete related works, including the mean-field  game algorithms, offline RL algorithms, and permutation equivariance RL algorithms, are given in Appendix \ref{app:related_work}.

\textbf{Model-based RL Bidding (MRLB):}
The design of algorithms based on the MRLB paradigm can date back to \cite{cai2017real}. However, it attempts to model the environment at the granularity of individual impression opportunities, which is impractical in online advertising systems. 
Very recently, \cite{chen2023model} proposed an algorithm based on the MRLB, named Model-Based Automatic Bidding (MBAB). However, MBAB resembles the vanilla MRLB algorithm, which can perform poorly when the real data is fixed as stated in Section \ref{sec:intro}.   
Notably, the proposed PE-MORL can address the OOD challenge caused by the fixed real data and is practical in online advertising systems.

\section{Notations and Preliminaries}
\label{sec:pre}
\textbf{Key Notations:}
$[N]$ denotes the positive integer set containing $1$ to $N$, where $N\in\mathbb{N}_+$;
$[\cdot]_+\triangleq\max\{\cdot, 0\}$; $|\mathcal{D}|$ represents the number of elements in the set $\mathcal{D}$; $\circ$ is the function composition operation; $\left\|\cdot\right\|_1$ denotes the 1-norm;
$\cup$ denotes the union operator between sets, and $\setminus$ denotes the subtraction operator between sets;

In the following, we define some important concepts, operators, and properties that will be used later.  
\begin{definition}[Ordered Vector]
An ordered vector, denoted as $\mathbf{x}\triangleq x_{1:N}\triangleq[x_1,x_2,\cdots,x_N]^\top$, is a vector with elements arranged in a specific sequence, where $x_i$ is the $i$-th element, $i\in[N]$.
\end{definition}

\begin{definition}[Permutation Operator] 
A permutation operator $\rho$ is a function that rearranges the elements of an ordered vector into a different sequence.
\end{definition}
Let $\Omega_N$ be the set of permutation operators operating on $N$-dimensional ordered vectors. Note that there are $N!$ distinct permutation operators in $\Omega_N$.
We next define the permutation equivariance and permutation invariance concepts that will play key roles in our later design of the environment model.
\begin{definition}[Permutation Equivariant and Permutation Invariant]
	A function $f:\mathcal{X}\rightarrow\mathcal{Y}$ is permutation equivariant (PE) if it holds that $f(\rho\mathbf{x})=\rho\mathbf{y}, \forall \rho$, where $\mathbf{x}\in\mathcal{X}$ and $\mathbf{y}=f(\mathbf{x})\in\mathcal{Y}$ are ordered vectors. 
	Moreover, a function $f:\mathcal{X}\rightarrow\mathbb{R}$ is permutation invariant (PI) if it holds that $f(\rho\mathbf{x})=y,\forall \rho$, where $y=f(\mathbf{x})$.
\end{definition}
\subsection{Single-agent Reinforcement Learning}

Single-agent RL \cite{sutton1999policy} focuses on learning optimal sequential decisions for an agent by interacting with an environment. The decision-making processes is typically modeled as a \emph{Partially Observed Markov Decision Process} (POMDP) \cite{krishnamurthy2016partially} $<\mathcal{S}, \mathcal{A}, \mathcal{O}, \Omega,  R, P, \gamma>$. At each time step $t \in \mathbb{N}_+$, $s_t \in \mathcal{S}$ is the state of the agent at time step $t$, $o_t \in \mathcal{O}$ is the observation, $a_t \in \mathcal{A}$ is the operation performed by the agent to interact with the environment. Then $R$ determines the immediate reward $r_t$ to the agent, and the agent's next state $s_{t+1}$ are transitioned by $P$. The agent's task is to find a policy $a_t=\pi(s_t)$ that can maximize the cumulative rewards. Besides, $\gamma\in[0,1]$ is the discounted factor.

The key to most RL methods lies in learning an optimal Q function. The Q function, denoted as $Q^\pi(s_t, a_t)$, represents the expected cumulative reward from taking action $a_t$ in the state $s_t$ and following a certain policy hereinafter. To evaluate the Q function of the current policy, a common approach is to train it with the Mean Squared Error (MSE) loss between the current Q value and the target Q value, i.e., 
\begin{align}
\label{equ:Q_loss}
\mathcal{L} = \mathbb{E}_{(s_t, a_t, r_t, s_{t+1}) \sim D} \left[ ( Q^\pi(s_t, a_t) - \hat{\mathcal{B}}Q^\pi(s_t,a_t) )^2 \right],
\end{align}
where 
\begin{align}   
\label{equ:Q_target}
\hat{\mathcal{B}}Q^\pi(s_t,a_t)=r_t(s_t,a_t)+\gamma \max_{a_{t+1}} Q^\pi(s_{t+1},a_{t+1})
\end{align}
is the target Q value, computed by the Bellman operator.

\begin{figure*}[t]
	\vskip 0.2in
	\begin{center}
		\centerline{\includegraphics[width=160mm]{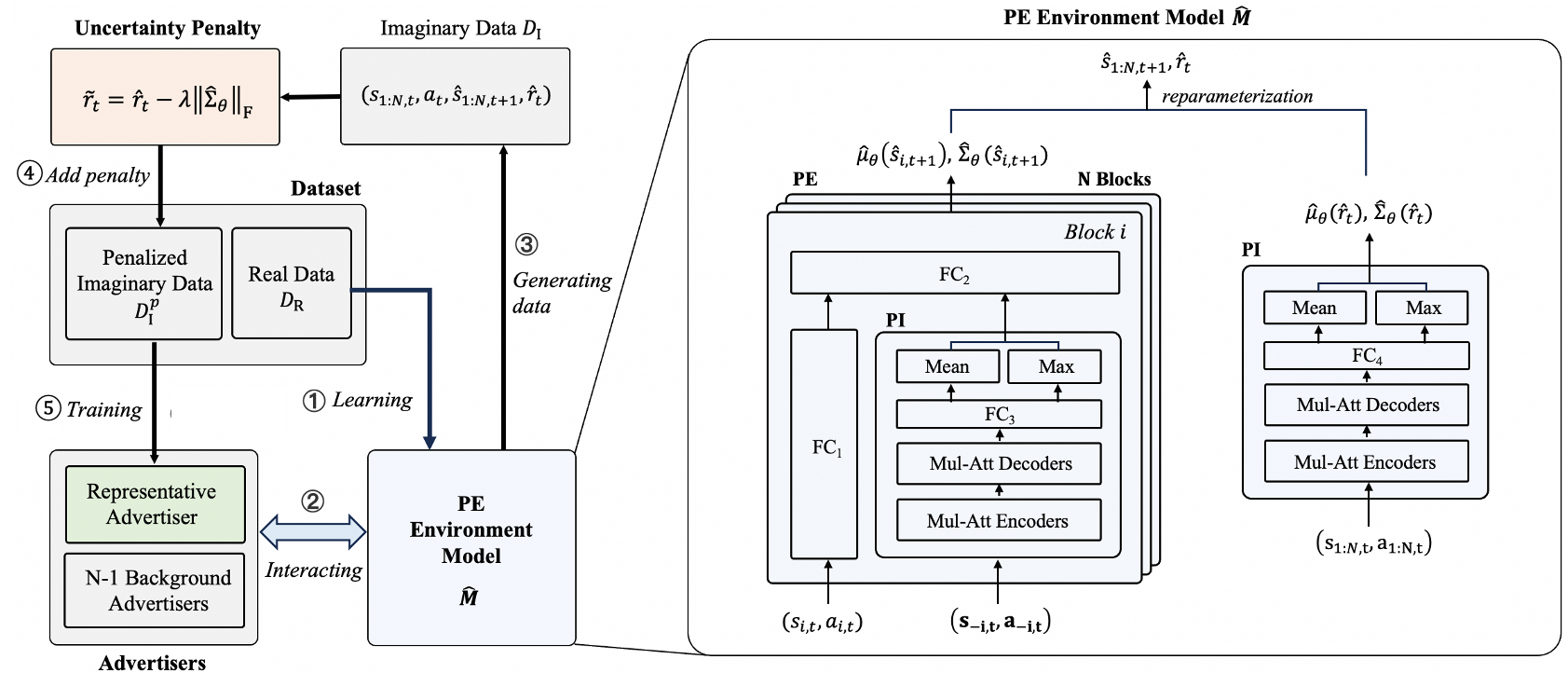}}
		\caption{The proposed PE-MORL algorithm is designed based on the MBRL paradigm.
  The PE-MORL follows the MBRL paradigm: it first learns an environment model and then trains the auto-bidding policy of the representative advertiser with the environment model and the real data. Moreover, the PE-MORL features two novel designs. The first design is the PE environment model with better generalization ability, whose architecture is shown on the right. The second design is the uncertainty penalty to avoid the OOD challenge.}
		\label{fig:algo}
	\end{center}
	\vskip -0.2in
\end{figure*}
\section{Problem Settings}
\label{sec:problem_settings}
In online advertising, the bidding process typically involves $N(>1)$ advertisers competing for impression opportunities simultaneously with a certain policy each, such as the auto-bidding policy.
Here, we focus on learning the optimal auto-bidding policy of a single advertiser, referred to as the \emph{representative advertiser}, under the condition where the policies of other $N-1$ advertisers, referred to as the \emph{background advertisers}, are fixed. 
Specifically, the bidding process of the representative advertiser can be modeled as a POMDP, $<\mathcal{S}, \mathcal{A}, \mathcal{O}, \Omega,  R, P, \gamma>$, with $T\in\mathbb{N}_+$ time steps, where $\mathbf{s}_t\in\mathcal{S}$ denotes the global state of all advertisers at time step $t\in[T]$, and $a_{N,t}\in\mathcal{A}$ denotes the local action, i.e., the bid, of the representative advertiser at time step $t$. 
Due to privacy concerns, the representative advertisers can only obtain an observation $o_{t}\in\mathcal{O}$ which does not contains any information about other advertisers and is also called as local state $s_{N,t}$. In practice, $o_{t}$ contains individual information of advertiser(e.g. the constraint and budget of advertiser, the category and sales history of their product, etc.), the current status of advertiser(e.g. time left, budget left, budget spend speed, etc.) and the prediction of the ad slot valuations, etc. 
Besides, We denote the action of background advertiser $i$ at time step $t$ as $a_{i,t}$, and the observation as $s_{i,t}$. The space of these two variables are also denoted as $\mathcal{A}$ and $\mathcal{O}$.
Then, the global state can be considered as the concatenation of all local states $s_{1:N,t}$. 
And let the joint action of all advertisers at time step $t$
be $\mathbf{a}_t\triangleq a_{1:N,t}\in\mathcal{A}^N$, where $\mathcal{A}^N$ denotes the joint action space.

Hence, the policy \footnote{For convenience, we refer to the auto-bidding policy simply as the ``policy".} of the representative advertiser should be constructed as a mapping from the observation space to the probabilistic space of the action space, i.e., $\pi:\mathcal{O}\rightarrow\Delta(\mathcal{A})$.
Moreover, $R:\mathcal{S}\times\mathcal{A}^N\rightarrow\mathbb{R}$ is the reward function of the representative advertiser, and $P:\mathcal{S}\times\mathcal{A}^N\rightarrow\Delta(\mathcal{S})$ is the transition rule. 
The bidding process can be described as follows: 
at each time step $t$, the representative advertiser takes action $a_{N,t}\sim\pi(o_t)$ and each background advertiser $i$ takes action $a_{i,t}$ at the same time; then the representative advertiser receives the total value of impression opportunities won by her or him
as 
the reward $r_t=R(\mathbf{s}_t, \mathbf{a}_t)$ and pays the corresponding cost $c_t$; afterward, 
all the advertisers transition to the next state $\mathbf{s}_{t+1}$ according to the transition rule $P$.
Besides, $\gamma=1$ is used in our system. 
The goal of the considered auto-bidding policy is to maximize the expected accumulative rewards of the representative agent under the budget constraint, i.e., 
\begin{align}
\label{equ:prob}
\max_{\pi}\;\mathbb{E}_{\pi,P}\bigg[\sum_{t=1}^{T}\gamma^tr_t\bigg],\quad\mathrm{s.t.} \; \sum_{t=1}^{T}c_t\le B,
\end{align}
where $B>0$ denotes the budget of the representative advertiser.
We next propose an MRLB algorithm, named PE-MORL, 
to solve \eqref{equ:prob}.

\section{Algorithm}
Fig. \ref{fig:algo} shows the proposed PE-MORL algorithm.
Compared with the vanilla MBRL paradigm described in Algorithm. \ref{alg:mrlb}, the PE-MORL features two novel designs. Firstly, we design the environment model as a PE function to improve its generalization ability with theoretical guarantees. 
Then, to further reduce the negative impact of the inaccuracies of  the environment model, especially the overestimations, on policy training, we design a robust offline Q learning method that can enhance the reliability of the imaginary data.
In the following, we dive into the details of these two designs and summarize the whole algorithm in Algorithm \ref{alg:PE-MORL}.
\subsection{Permutation Equivariant Environment Model}
\label{sec:pe_environment_model}
As the environment model inherently models the reward function $R$ and the transition rule $P$ --- both are functions of the global state space $\mathcal{S}$ and the joint action space $\mathcal{A}^N$, we design the environment model as a function $\hat{M}$ that outputs the predictions on the next global state $\hat{\mathbf{s}}_{t+1}$ and the reward $\hat{r}_t$ with the global state $\mathbf{s}_t$ and the joint action $\mathbf{a}_t$ as inputs.
Note that although the representative advertiser cannot get $\mathbf{s}_t$ and $\mathbf{a}_t$ during the online inference, we can obtain them during the offline training \footnote{Note that the policy of the representative advertiser $\pi$ can still only take the observation $o_t$ as input during the offline training.} and use them to construct the environment model $\hat{M}$. 
Moreover, we can prove that the reward function $R$ is a PI function and the transition rule $P$ is a PE function in the following proposition.
\begin{proposition}[The PE and PI Properties of the POMDP]
	\label{prop:homo}
In the POMDP of the representation advertiser, the transition rule $P$ is a PE function, i.e., $	
\rho\mathbf{s}_{t+1}\sim P(\rho\mathbf{s}_t,\rho\mathbf{a}_t)$, and 
the reward $R$ is a PI function, i.e., $R(\rho\mathbf{s}_t,\rho\mathbf{a}_{t})=R(\mathbf{s}_t,\mathbf{a}_{t})$, $\forall t, \rho$.
\end{proposition}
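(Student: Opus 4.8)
The plan is to reduce both properties to one structural fact about the online advertising system: its allocation and payment rules are \emph{anonymous}, i.e.\ they depend on the advertisers only through their local states and submitted bids, never through the index that labels them. First (Step~1) I would make this precise by exhibiting index-free, multiset-symmetric representations of $P$ and $R$. For the transition, there is a common per-advertiser kernel $P_0$ such that, conditionally on $(\mathbf{s}_t,\mathbf{a}_t)$, advertiser $i$'s next local state is distributed as $P_0\big(\,\cdot\mid s_{i,t},a_{i,t},\{(s_{j,t},a_{j,t})\}_{j\neq i}\big)$, with $P_0$ symmetric in its multiset argument and the joint next-state law exchangeable in the advertisers --- this is just the statement that each advertiser's budget/time/spend update is computed from the auction outcome, which in turn is a symmetric function of the bids. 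For the reward, $R(\mathbf{s}_t,\mathbf{a}_t)=R_0\big(s_{N,t},a_{N,t},\{(s_{j,t},a_{j,t})\}_{j\neq N}\big)$ with $R_0$ symmetric in the competitor multiset, since the value won by the representative depends only on her own bid and valuations together with the \emph{set} of competing bids. Obtaining these two representations from the mechanics of the system is the main conceptual step, and it is the precise sense in which ``the order of the advertisers carries no information''~\cite{lee2019set}.

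Given the representations, I would verify the two identities coordinate by coordinate. For the transition, fix $\rho\in\Omega_N$ and inspect the $i$-th block of $P(\rho\mathbf{s}_t,\rho\mathbf{a}_t)$: feeding the permuted inputs places advertiser $\rho^{-1}(i)$'s own features in slot $i$, while the multiset of the remaining advertisers' features is unchanged because reordering a multiset does nothing, so that block equals $P_0\big(\,\cdot\mid s_{\rho^{-1}(i),t},a_{\rho^{-1}(i),t},\{(s_{j,t},a_{j,t})\}_{j\neq \rho^{-1}(i)}\big)$, which is exactly the $\rho^{-1}(i)$-th block of $P(\mathbf{s}_t,\mathbf{a}_t)$. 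Collecting over all $i$ (and using exchangeability of the joint law) gives $P(\rho\mathbf{s}_t,\rho\mathbf{a}_t)=\rho\,P(\mathbf{s}_t,\mathbf{a}_t)$ as distributions, so $\mathbf{s}_{t+1}\sim P(\mathbf{s}_t,\mathbf{a}_t)$ implies $\rho\mathbf{s}_{t+1}\sim P(\rho\mathbf{s}_t,\rho\mathbf{a}_t)$, the PE property. For the reward, the relevant permutations reorder the background advertisers while the representative keeps her designated slot, so $R(\rho\mathbf{s}_t,\rho\mathbf{a}_t)=R_0\big(s_{N,t},a_{N,t},\{(s_{j,t},a_{j,t})\}_{j\neq N}\big)=R(\mathbf{s}_t,\mathbf{a}_t)$, because $R_0$ sees only the competitor multiset, which $\rho$ leaves invariant --- the PI property.

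The main obstacle is Step~1: one must argue from the actual rules of the advertising system that the transition and the representative's reward genuinely admit these anonymous, multiset-symmetric forms (equivalently, that relabeling advertisers commutes with the auction's allocation and pricing), rather than take it for granted. I would also be explicit about the scope of $\rho$: the transition is symmetric across all $N$ advertisers, so any $\rho\in\Omega_N$ is admissible there, whereas the reward is tied to the representative, so its invariance holds for permutations that fix the representative's slot (equivalently, permutations of the $N-1$ background advertisers); stating this makes the ``$\forall\rho$'' in the proposition unambiguous. Finally, the observation components $\mathcal{O}$ and $\Omega$ of the POMDP play no role, since the claim concerns only $P$ and $R$.
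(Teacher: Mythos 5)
Your proposal is correct in outline and rests on the same underlying fact as the paper --- anonymity of the auction with respect to advertiser labels --- but the two arguments place the substantive assumption at different levels. You posit in Step~1 an index-free representation (a shared per-advertiser kernel $P_0$ symmetric in the competitor multiset, and $R=R_0\big(s_{N,t},a_{N,t},\{(s_{j,t},a_{j,t})\}_{j\neq N}\big)$); once that is granted, your coordinate-wise verification is immediate, but that representation is essentially a restatement of the PE/PI properties being proven, so the mathematical content is deferred to exactly the step you flag as the obstacle. The paper goes one level deeper into the mechanism: it introduces the allocation matrix $\mathbf{G}$, the pricing matrix $\mathbf{C}$ and the value matrix $\mathbf{V}$, assumes only that these are equivariant under relabeling of advertisers (\eqref{equ:pe_g}--\eqref{equ:pe_v}), and then \emph{derives} the claim by explicit computation: the joint reward is $\mathrm{diag}(\mathbf{G}^\top\mathbf{V})$, the joint cost is $\mathrm{diag}(\mathbf{C}^\top\mathbf{J})$, the state update depends only on the cost spent, and the permutation-matrix identity $\mathrm{diag}(\rho_c^\top\mathbf{A}\rho_c)=\rho\,\mathrm{diag}(\mathbf{A})$ (Lemma~\ref{lemma:equv}) pushes $\rho$ through. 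What the finer decomposition buys is a set of primitive assumptions tied to concrete auction components (allocation, pricing, valuation), which are easier to accept than the target statement itself, together with a mechanical derivation; what your version buys is brevity and a cleaner statement of scope. In fact your caveat about which $\rho$ are admissible for the reward is more careful than the paper: the paper proves equivariance of the joint reward vector, $\mathbf{r}_t(\rho\mathbf{s}_t,\rho\mathbf{a}_t)=\rho\,\mathbf{r}_t(\mathbf{s}_t,\mathbf{a}_t)$ in \eqref{equ:r_homo}, and then extracts the $N$-th coordinate, which taken literally gives $r_{\rho^{-1}(N),t}$ and hence the stated ``$\forall\rho$'' invariance only for permutations fixing the representative's slot (or under an additional anonymity argument) --- precisely the restriction you make explicit. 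To turn your plan into a complete proof, you would need to carry out the analogue of the paper's step: derive your multiset-symmetric representation from the allocation/pricing/valuation rules rather than assert it.
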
 
In essence, the reason lies in the PE and PI property of the auction mechanism in the online advertising system concerning the local states and the bids of all advertisers. The detailed proof is given in Appendix \ref{app:proof_prop_homo}.
Hence, we design the environment model $\hat{M}$ to be a PE function from the perspective of $\hat{\mathbf{s}}_{t+1}$ and a PI function from the perspective of $\hat{r}_t$, i.e., for any permutation operator $\rho$, we have:
\begin{align}
\label{equ:design_of_model}
 \rho\hat{\mathbf{s}}_{t+1},\hat{r}_t = \hat{M}(\rho\mathbf{s}_t,\rho\mathbf{a}_t).
\end{align}
We name the designed environment model $\hat{M}$ as the \emph{PE environment model} \footnote{Note that as \eqref{equ:design_of_model} can be equivalently expressed as \eqref{equ:equivariant_design_sim}, we directly name the environment model as the PE environment model.}.
A direct insight behind this design is that the information contained in $\mathbf{s}_t$ and $\mathbf{a}_t$ does not change no matter how we swap the elements in them, and hence, $\hat{r}_t$ and the values of the elements in $\hat{s}_{t+1}$ should not be changed, only the positions of the elements in $\hat{s}_{t+1}$ should be swapped accordingly \cite{li2021permutation}. 
Moreover, as we know one of the most desirable characteristics of the environment models is their strong ability for generalization. 

We evaluate the generalization ability of an environment model $M$ by the gap between its accuracy on the training dataset $\mathcal{S}$, and its accuracy on the test dataset $\mathcal{D}$, i.e., 
\begin{align}
	\Delta_G(\hat{M})\triangleq\mathbb{E}_{\mathcal{D}}[L(\hat{M}, \mathbf{s_t}, \mathbf{a}_t)]-\frac{1}{|\mathcal{S}|}\sum_{\mathcal{S}}L(\hat{M}, \mathbf{s_t}, \mathbf{a}_t),
\end{align}
where $L(\hat{M}, \mathbf{s_t}, \mathbf{a}_t)$ is the loss function (here we use the mean absolute error (MAE) as the loss function) and can represent the accuracy of the environment model.
Note that a smaller gap $\Delta_G(\hat{M})$ indicates a better generalization ability of the environment model $M$ since we can reduce the loss of the environment model on the training dataset to obtain a lower value of the loss of the environment model on the test dataset, which indicates a better generalization ability.
Notably, we can prove that the design in \eqref{equ:design_of_model} can reduce the upper bound of the gap $\Delta_G(\hat{M})$.

\begin{theorem}[Better Generalization Ability of the PE Environment Model]
	For any environment model $\hat{M}'$, making it satisfy  \eqref{equ:design_of_model} can reduce the upper bound of the gap $\Delta_G(\hat{M}')$, i.e.,
	\begin{align}
        \label{equ:upper_bound_pe}
		 \text{Upper Bound of }  \Delta_G(\hat{M})\le\text{Upper Bound of }  \Delta_G(\hat{M}'),
	\end{align}
	where $\hat{M}$ denotes the environment model that satisfies  \eqref{equ:design_of_model}.
\end{theorem}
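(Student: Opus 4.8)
The plan is to cast \eqref{equ:upper_bound_pe} in the standard framework of uniform-convergence generalization bounds and then exploit that imposing \eqref{equ:design_of_model} restricts the environment-model hypothesis class to a smaller, more structured subclass. Let $\mathcal{H}$ be the hypothesis class from which $\hat{M}'$ is drawn and let $\mathcal{H}_{\mathrm{PE}}\subseteq\mathcal{H}$ be the subclass of models satisfying \eqref{equ:design_of_model}; by construction $\hat{M}\in\mathcal{H}_{\mathrm{PE}}$. Any classical complexity-based bound — for instance one phrased through the Rademacher complexity of the composed loss class $L\circ\mathcal{H}$, or through an $L_1$ covering number of it — yields, with high probability and uniformly over the class, an upper bound on $\Delta_G$ of the form ``complexity term $+$ $O(\sqrt{\log(1/\delta)/|\mathcal{S}|})$''. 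Since $\mathcal{H}_{\mathrm{PE}}\subseteq\mathcal{H}$, and since both the Rademacher complexity and the covering number are monotone under class inclusion, the complexity term for $\mathcal{H}_{\mathrm{PE}}$ is no larger than that for $\mathcal{H}$; this already gives the inequality \eqref{equ:upper_bound_pe}.

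To make the improvement genuine rather than vacuous I would invoke the orbit / quotient argument (equivalently, a ``free data augmentation'' view). First one checks, using Proposition~\ref{prop:homo}, that for a model $\hat{M}\in\mathcal{H}_{\mathrm{PE}}$ the MAE loss is itself invariant along permutation orbits: since $\hat{M}(\rho\mathbf{s}_t,\rho\mathbf{a}_t)$ outputs $\rho\hat{\mathbf{s}}_{t+1}$ together with the unchanged $\hat{r}_t$, while the ground truth transforms the same way by Proposition~\ref{prop:homo}, the per-coordinate errors on $(\rho\mathbf{s}_t,\rho\mathbf{a}_t)$ are merely a permutation of those on $(\mathbf{s}_t,\mathbf{a}_t)$, so $L(\hat{M},\rho\mathbf{s}_t,\rho\mathbf{a}_t)=L(\hat{M},\mathbf{s}_t,\mathbf{a}_t)$ for every $\rho\in\Omega_N$. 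Consequently, evaluating any $\hat{M}\in\mathcal{H}_{\mathrm{PE}}$ on the training set $\mathcal{S}$ carries exactly the same information as evaluating it on the orbit $\bigcup_{\rho\in\Omega_N}\rho\mathcal{S}$, which is up to $|\Omega_N|=N!$ times larger; equivalently, a PE model is determined by its restriction to a fundamental domain of the $\Omega_N$-action, so the effective input space — and hence the covering number of $L\circ\mathcal{H}_{\mathrm{PE}}$ at any scale — shrinks by up to a factor $N!$ relative to $L\circ\mathcal{H}$. Feeding this reduced complexity into the same bound replaces the $O(\sqrt{\log\mathcal{N}/|\mathcal{S}|})$ term by a strictly smaller one (informally, $\log\mathcal{N}$ drops by $\Theta(N\log N)$, or the effective sample size grows to $N!\,|\mathcal{S}|$), so \eqref{equ:upper_bound_pe} holds with room to spare.

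The main obstacle is the second step: turning the informal ``$N!$ times more data'' intuition into a rigorous reduction of the complexity measure. The delicacies are (i) the output of $\hat{M}$ is a heterogeneous vector — a PE block $\hat{\mathbf{s}}_{t+1}$ and a PI scalar $\hat{r}_t$ — so the restriction-to-a-fundamental-domain (or orbit-averaging) argument must be carried out coordinate-wise and then recombined, which is exactly where Proposition~\ref{prop:homo} is indispensable; (ii) one must verify that the particular bound chosen is monotone both under the inclusion $\mathcal{H}_{\mathrm{PE}}\subseteq\mathcal{H}$ and under the quotienting step (true for Rademacher complexity and for covering numbers, since the orbit construction only identifies already-equivalent evaluations and never enlarges the class); and (iii) the orbit points $\rho\mathbf{x}$ are not independent, so the gain cannot be obtained by literally applying an i.i.d.\ bound with sample size $N!\,|\mathcal{S}|$ — instead one argues through the covering number of the quotiented input space, where independence is not needed. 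I would close by emphasizing that this whole argument only bounds $\Delta_G$ from above, exactly as the theorem asserts, and that the empirical comparison against a non-equivariant network and the GSP simulator corroborates the predicted generalization gain.
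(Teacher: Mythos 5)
Your overall framework coincides with the paper's: both arguments bound $\Delta_G$ by a uniform-convergence bound driven by the $r$-covering number of the loss-composed model class (the paper uses Lemma \ref{lemma:generalization} with the observation that the MAE loss is $1$-Lipschitz in $\|\cdot\|_\infty$, i.e.\ $|L(\hat{M}_1,\mathbf{s}_t,\mathbf{a}_t)-L(\hat{M}_2,\mathbf{s}_t,\mathbf{a}_t)|\le\|\hat{M}_1-\hat{M}_2\|_\infty$), and both exploit permutation symmetry, including the orbit-invariance of the loss for a PE model that you correctly derive from Proposition \ref{prop:homo}. However, the two mechanisms you propose for comparing complexities do not close the argument. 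First, the class-inclusion step $\mathcal{H}_{\mathrm{PE}}\subseteq\mathcal{H}$ presumes that ``making $\hat{M}'$ satisfy \eqref{equ:design_of_model}'' keeps you inside the original hypothesis class. The paper's formalization of that operation is orbit averaging, $\mathcal{Q}\hat{M}'=\frac{1}{N!}\sum_{\rho\in\Omega_N}\rho^{-1}\circ\hat{M}'\circ\rho$, and $\mathcal{Q}\mathcal{M}'$ is in general \emph{not} a subset of $\mathcal{M}'$ (e.g.\ when $\mathcal{M}'$ is a specific non-equivariant parametric family, whose PE members may be trivial or nonexistent), so monotonicity under inclusion does not apply to the object the theorem actually constructs. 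Second, your quantitative claim that the covering number shrinks by a factor tied to $N!$ (a fundamental-domain / ``$N!$ more data'' argument) is exactly the step you flag as delicate, and it is neither proved nor needed: the theorem only asserts an inequality between upper bounds.

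The missing idea that makes the paper's proof go through without either assumption is that $\mathcal{Q}$ is a non-expansive operator in the $l_\infty$ metric: by the triangle inequality and the fact that taking the supremum over $(\mathbf{s},\mathbf{a})$ is invariant under replacing $(\mathbf{s},\mathbf{a})$ by $(\rho\mathbf{s},\rho\mathbf{a})$, one gets $l_\infty(\mathcal{Q}\hat{M}'_1,\mathcal{Q}\hat{M}'_2)\le l_\infty(\hat{M}'_1,\hat{M}'_2)$. Hence the image under $\mathcal{Q}$ of any $r$-cover of $\mathcal{M}'$ is an $r$-cover of $\mathcal{Q}\mathcal{M}'$, giving $\mathcal{N}_\infty(\mathcal{Q}\mathcal{M}',r)\le\mathcal{N}_\infty(\mathcal{M}',r)$ (Proposition \ref{prop:small_cover_number}) directly, with no inclusion relation and no quotient-space or data-augmentation machinery. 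Plugging this into the covering-number bound with $K=1$ yields \eqref{equ:upper_bound_pe}. If you replace your two comparison steps with this contraction argument (and keep your correct Lipschitz and loss-invariance observations), your proof becomes essentially the paper's; as written, it has a genuine gap at precisely the point where the complexity of the PE class must be related to that of the original class.
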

The proof is given in Appendix \ref{app:proof_thm}. Therefore, the environment model with the design of \eqref{equ:design_of_model} can have a better generalization ability.


\subsubsection{Architecture Design:}

To design an environment model $\hat{M}$ that satisfies \eqref{equ:design_of_model}, a straightforward way is to randomly build a function approximator, e.g. a neural network, and apply the orbit averaging $\mathcal{Q}$ to it, as is described previously. However, as the number of background advertisers is usually large and applying the orbit averaging requires $N!$ times of calculations every time, it is time infeasible in practice.
Hence, we here design a neural network as the environment model $\hat{M}$ with the structure that naturally satisfies \eqref{equ:design_of_model}. We next dive into the details of its structure.
\begin{remark}
    Intuitively, a neural network that naturally satisfies \eqref{equ:design_of_model} can be expected to easily approximate the functions after orbit averaging $\mathcal{Q}$ than a vanilla neural network does, especially when the training dataset is not sufficient \cite{qin2022benefits, duan2022context}. 
\end{remark}
As shown in Fig. \ref{fig:algo}, the environment model $\hat{M}$ is mainly composed of two neural networks, where the left one is a PE function, responsible for predicting the next state $\hat{\mathbf{s}}_t$ and the right one is a PI function, responsible for predicting the reward $\hat{r}_t$.

\textbf{The Left Neural Network}
contains $N$ \emph{blocks} with identical structures and shared parameters, where block $i$ 
takes the global state $\mathbf{s}_t$ and the joint action $\mathbf{a}_t$ as inputs and generates the prediction on the local state, $\hat{s}_{i,t+1}$.
Specifically, 
block $i$ is composed of three modules, including (1) a fully connected neural network, named FC$_1$, used to extract the embedding of the corresponding advertiser; (2) a PI neural network used to extract the embedding of all other advertisers; and (3) a fully connected neural network, named FC$_2$, used to deal with the embeddings generated by the first two modules and output the predictions $\hat{s}_{i,t+1}$.
The insight of designing a PI neural network for feature extractions of all other agents is that: the input order of other agents can neither change the information contained in $(\mathbf{s}_{-i,t},\mathbf{a}_{-i,t})$ nor influence the situation the corresponding advertiser faces. 
Notably, the structure of the PI neural network is designed based on the following lemma. 
\begin{lemma}[Theorem 2 in \cite{zaheer2017deep}]
\label{lemma:pi}
All PI functions can be represented as $\phi(\text{pool}(\psi(\cdot)))$ when $\text{pool}(\cdot)$ is the sum operator and $\phi(\cdot)$ and $\psi(\cdot)$ are continuous. 
\end{lemma}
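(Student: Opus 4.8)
The plan is to prove the nontrivial direction — that every permutation-invariant $f$ admits the form $\phi(\sum_{i=1}^N \psi(x_i))$ — since the converse is immediate: if $f(\mathbf{x}) = \phi(\sum_{i=1}^N \psi(x_i))$, then reordering the elements leaves the sum unchanged, so $f(\rho\mathbf{x}) = f(\mathbf{x})$ for every $\rho \in \Omega_N$. I would work in the representative setting where each element $x_i$ lies in a compact subset of $\mathbb{R}$ and the cardinality $N$ is fixed, which matches the PI feature-extraction block of our architecture. First I would fix the inner map $\psi:\mathbb{R}\rightarrow\mathbb{R}^{N+1}$ to be the vector of monomials $\psi(x) = (1, x, x^2, \ldots, x^N)^\top$, which is manifestly continuous. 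Summing over the set gives $E(\mathbf{x}) \triangleq \sum_{i=1}^N \psi(x_i) = (N, p_1, \ldots, p_N)^\top$, whose coordinates are the power sums $p_k = \sum_{i=1}^N x_i^k$.

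The key structural step is to show that $E$ separates multisets, i.e.\ $E(\mathbf{x}) = E(\mathbf{x}')$ holds if and only if $\mathbf{x}'$ is a permutation of $\mathbf{x}$. This follows from Newton's identities, which express the elementary symmetric polynomials $e_1, \ldots, e_N$ as polynomial functions of the power sums $p_1, \ldots, p_N$. Since the $e_k$ are, up to sign, the coefficients of the monic polynomial $\prod_{i=1}^N (z - x_i)$, the power sums determine this polynomial and hence its multiset of roots $\{x_1, \ldots, x_N\}$. Consequently $E$ descends to a bijection between size-$N$ multisets and its range $\mathcal{R} \triangleq E(\mathbb{R}^N) \subseteq \mathbb{R}^{N+1}$.

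With separation in hand, I would define $\phi$ on $\mathcal{R}$ by $\phi(E(\mathbf{x})) \triangleq f(\mathbf{x})$. This is well defined precisely because $f$ is PI: whenever $E(\mathbf{x}) = E(\mathbf{x}')$ the two inputs share a multiset, so $f(\mathbf{x}) = f(\mathbf{x}')$. It then remains to argue that $\phi$ is continuous and to extend it to all of $\mathbb{R}^{N+1}$.

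The main obstacle, which I expect to be the delicate part, is the continuity of $\phi$. The natural factorization is $\phi = f \circ \sigma \circ r$, where $r$ sends a point of $\mathcal{R}$ (equivalently, via Newton's identities, the coefficients of the monic polynomial) to its multiset of roots, and $\sigma$ sorts that multiset into a canonical ordered tuple; by the PI property of $f$ this composition reproduces $f(\mathbf{x})$. Continuity of $r$ is the classical fact that the roots of a monic polynomial depend continuously on its coefficients, and $\sigma$ (sorting) is continuous on $\mathbb{R}^N$, so composing with the continuous $f$ makes $\phi$ continuous on $\mathcal{R}$. Finally, because $\mathcal{R}$ is a closed subset of a compact box when the $x_i$ range over a compact set, the Tietze extension theorem furnishes a continuous $\phi$ on the whole ambient space that agrees with it on $\mathcal{R}$. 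Assembling these pieces yields $f(\mathbf{x}) = \phi(\sum_{i=1}^N \psi(x_i))$ with both $\phi$ and $\psi$ continuous, which establishes the lemma.
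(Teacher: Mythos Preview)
The paper does not actually prove this lemma: it is stated as Theorem~2 of \cite{zaheer2017deep} and used as a black box to justify the $\phi(\mathrm{pool}(\psi(\cdot)))$ architecture of the PI block, with no argument supplied. So there is nothing in the paper to compare your proof against.

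That said, your argument is correct and is essentially the original proof from \cite{zaheer2017deep}: encode each element via the monomial map $\psi(x)=(1,x,\dots,x^N)^\top$, observe that the sum records the power sums, invoke Newton's identities to recover the elementary symmetric polynomials and hence the multiset of roots, define $\phi$ on the image by $\phi(E(\mathbf{x}))=f(\mathbf{x})$ (well defined by permutation invariance), and obtain continuity of $\phi$ from continuous dependence of roots on coefficients followed by a Tietze extension off the compact image. Your explicit restriction to scalar elements in a compact interval with fixed $N$ is the appropriate regime for this argument and matches the hypotheses under which the cited theorem is proved; the lemma as phrased in the paper (``all PI functions'') is informal in this respect, and you are right to make the setting precise.
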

The PI neural network is designed in the form of $\phi(\text{pool}(\psi(\cdot)))$. Specifically, we use several multi-head attention (Mul-Att) encoders and decoders \cite{vaswani2017attention} as $\psi(\cdot)$. Note that there is a fully connected neural network at the end of $\psi(\cdot)$ across attention heads to extract features between different heads. 
In block $i$, we denote the output of $\psi(\cdot)$ as $\mathbf{B}_{i,t}\triangleq[\mathbf{b}^i_{0,t},\cdots,\mathbf{b}^i_{i-1,t},\mathbf{b}^i_{i+1,t},\cdots,\mathbf{b}^i_{N,t}]$, where $\mathbf{b}^i_{j,t}\in\mathbb{R}^B$ is the extracted feature for the advertiser with $(s_{j,t},a_{j,t})$, where $j\in[N]\cup\{0\}\setminus\{i\}$.
For the $pool(\cdot)$ function, we use two functions, including a mean function that is inherently 
the same as the sum operator (as required in Lemma \ref{lemma:pi}), and a max function.  
The insight for using the max function is that: for each advertiser, usually only the highest bid of other advertisers will influence its reward and state transition.
Let $\mathbf{m}_t^i\triangleq m^i_{1:N,t}\in\mathbb{R}^B$ and $\mathbf{g}_t^i\triangleq g^i_{1:N,t}\in\mathbb{R}^B$ be the outputs of the mean function and the max function, respectively.  
Then we have $\forall k\in[B]$:
\begin{align}
m_{k,t}^i=\frac{1}{N}\sum_{j\in[N]\cup\{0\}\setminus\{i\}}b^{i,k}_{j,t},
\end{align}
where $b^{i,k}_{j,t}$ denotes the $k$-th element in $\mathbf{b}^i_{j,t}$, and 
\begin{align}
    {g}_{k,t}^i=\max_j\bigg\{b^{i,k}_{j,t}\bigg|j\in[N]\cup\{0\}\setminus\{i\}\bigg\}.
\end{align}
Afterward, $\mathbf{m}^i_t$ and $\mathbf{g}^i_t$ are concatenated and processed by FC$_2$ together with the outputs of FC$_1$. Here FC$_2$ can be viewed as the function $\phi(\cdot)$ in Lemma \ref{lemma:pi}.
Note that we assume that the ground truth ${s}_{i,t+1}$ obeys a certain Gaussian distribution. 
Hence, FC$_2$ is designed to generate the mean value of $\hat{s}_{i,t+1}$, denoted as $\hat{\mu}_\theta(\hat{s}_{i,t+1})$, and its variance and covariances with the predictions on all other next local states and the reward, denoted as $\hat{\Sigma}_\theta(\hat{s}_{i,t+1})$, where $\theta$ represents the trainable parameters of all the environment model $\hat{M}$.
As different blocks have the same structure and share the parameters, all the $N$ blocks constitute a PE function. 

\textbf{The Right Neural Network} contains only a single block with the same architecture as the PI neural network on the left but with independent parameters and different input dimensions. 
Specifically, the Mul-Att encoder takes $(\mathbf{s}_t,\mathbf{a}_t)$ as inputs rather than $(\mathbf{s}_{-i,t},\mathbf{a}_{-i,t})$.
The right neural network generates the predictions on the mean of $\hat{r}_t$, denoted as $\hat{\mu}_\theta(\hat{r}_t)$ as well as its variance and covariances with the predictions on the next local states, denoted as $\hat{\Sigma}_\theta(\hat{r}_t)$. 

\textbf{The Outputs} of the left and right neural networks constitute the predictions on the mean, denoted as $\hat{\mathbf{\mu}}_\theta$, and the covariance matrix, denoted as $\hat{\Sigma}_\theta$,
of all the next local states and the reward. Specifically, the mean $\hat{\mathbf{\mu}}_\theta$ can be represented as:
\begin{align}
\hat{\mu}_\theta\triangleq
[\hat{\mu}_\theta(\hat{s}_{1,t+1}),\cdots,\hat{\mu}_\theta(\hat{s}_{N,t+1}), \hat{\mu}_\theta(\hat{r}_t)].
\end{align}
To guarantee the symmetric property of the covariance matrix, the covariance matrix $\hat{\Sigma}_\theta$ is designed as:
\begin{align}
     \hat{\Sigma}_\theta=\frac{1}{2}(\tilde{\Sigma}_\theta+\tilde{\Sigma}^\top_\theta),
\end{align}
where $\tilde{\Sigma}_\theta$ is defined as:
\begin{align}
    \tilde{\Sigma}_\theta\triangleq[\hat{\Sigma}_\theta(\hat{s}_{1,t+1}),\cdots,\hat{\Sigma}_\theta(\hat{s}_{N,t+1}), \hat{\Sigma}_\theta(\hat{r}_{t})]^\top.
\end{align}
Then, the predicted next local state $\hat{s}_{i,t+1}$ and the reward $\hat{r}_t$ are generated by the re-parameterization method \cite{mohamed2020monte} based on $\hat{\mu}_\theta$ and $\hat{\Sigma}_\theta$, i.e.,
\begin{align}
    \begin{bmatrix}
        \hat{\mathbf{s}}_{t+1}\\
        \hat{r}_t
    \end{bmatrix}=\hat{\mu}_\theta+\mathbf{L}\mathbf{\epsilon},
\end{align}
where $\mathbf{L}$ is the Cholesky factorization of the covariance matrix, i.e., $\hat{\Sigma}_\theta=\mathbf{L}\mathbf{L}^\top$, and $\epsilon\sim\mathcal{N}(\mathbf{0},\mathbf{I})$ is a random vector obeying the standard Gaussian distribution.


\textbf{The Loss Function} is designed as the maximum log-likelihood function with regularization terms \cite{kendall2017uncertainties}, i.e.,
\begin{align}
\label{equ:loss_function}
    \mathcal{L}(\theta)=\mathbb{E}_{[\mathbf{s}_{t+1}^\top,r_t]^\top\sim\mathcal{D}_\text{R}}\bigg[&\big(\begin{bmatrix}
        {\mathbf{s}}_{t+1}\\
        {r}_t
    \end{bmatrix}-\hat{\mu}_\theta\big)^\top{\Sigma}_\theta^{-1}\big(\begin{bmatrix}
        {\mathbf{s}}_{t+1}\\
        {r}_t
    \end{bmatrix}-\hat{\mu}_\theta\big)\notag\\
    &+\log|\hat{\Sigma}_\theta|+\|\hat{\Sigma}_\theta\|_F
    \bigg],
\end{align}
where $\mathcal{D}_\text{R}=\{(\mathbf{s}_t,\mathbf{a}_t, r_t,\mathbf{s}_{t+1})\}$ denotes the real data collected from the online advertising system, containing the real transition pairs. 
To train the equivariant environment model $\hat{M}$, we apply the gradient descent method based on $\mathcal{L}(\theta)$, i.e.,
\begin{align}
\label{equ:update_env}
    \theta\leftarrow\theta - \alpha\nabla_\theta\mathcal{L}(\theta),
\end{align}
where $\alpha>0$ is the update step.

\textbf{Ensemble Environment Models:} To increase the robustness of data generated by the environment model, we train an ensemble of $K$ environment models. During the training phase, the auto-bidding policy interacts with all $K$ environment models, and we randomly pick the prediction of an environment model as the final prediction and store it into the imaginary data, denoted as $\mathcal{D}_\text{I}$. Specifically, the imaginary data consists of the state transition pairs of the representative advertiser, i.e., 
$\mathcal{D}_\text{I}=\{(o_t,a_{N,t}, \hat{r}_t,\hat{o}_{t+1})\}$, where $\hat{o}_{t+1}=O(\hat{\mathbf{s}}_{t+1})$.

\subsection{Robust Offline Q Learning Method}
Given that the real data $\mathcal{D}_\text{R}$ is fixed, 
the environment model is not subject to iterative enhancement concurrent with the policy training. 
Hence, the inaccuracies of the environment model
at certain states outside the real data $\mathcal{D}_\text{R}$ will persist and may seriously mislead the training of the policy. 
Particularly, the environment model's overestimations of rewards 
can seriously jeopardize the policy since it might incorrectly deviate from the original data $\mathcal{D}_\text{R}$, leading to a significant performance decline. In addition, if the states predicted by the environment model have erroneous large Q values, then it can also mislead the policy since they act as the target Q value in the Q loss function as described in \eqref{equ:Q_loss}.

Our main idea to address these challenges is to pessimistically treat the predicted rewards of the environment model and the Q values of its predicted states. Specifically, in the robust offline Q learning method, we modify the target Q value $\hat{\mathcal{B}}Q(s_t,a_t)$ in the loss function of Q as:
\begin{align}
\label{equ:target_q}
\hat{\mathcal{B}}Q(s_t,a_t)=\underbrace{\hat{r}_t-\lambda\|\hat{\sigma}_t\|_F}_{\text{ reward penalty}} + \gamma \underbrace{\min_{\hat{s}_{t+1}\in \hat{M}(s, a)}(\max_{a'\in\mathcal{A}}Q(\hat{s}_{t+1}, a'))}_{\text{state penalty}},
\end{align}
where $\hat{M}(s, a)$ denotes the set of the samples of the environment model's prediction on states, $\hat{\sigma}_t$ denotes the variance of the ensemble environment model's predicted rewards, and $\lambda>0$ is a constant.
In the state penalty, we take the minimum of the max Q value of the predicted state to treat the predictions of the environment model pessimistically. This can increase the robustness of the training policy even in the worst case.  As for the reward penalty, we note that the environment model's inaccuracy is higher in states outside the real data $\mathcal{D}_\text{R}$ and lower in states within it, which is consistent with the value of $\hat{\sigma}_t$ \cite{yu2020mopo,schulz2018tutorial}. Therefore, 
penalizing it can make the policy more pessimistic about states outside the real data $\mathcal{D}_\text{R}$, thereby decreasing the risk of deviating incorrectly from the real data $\mathcal{D}_\text{R}$. 
Intuitively, the penalized predicted reward $\tilde{r}\triangleq\hat{r}_t-\lambda\|\hat{\sigma}_t\|_F$ could have lower values as the state-action pairs move away from the real data $\mathcal{D}_\text{R}$, which can constrain the trajectories of the policy within the proximity of $\mathcal{D}_\text{R}$.
Theoretically, we can prove that optimizing the policy in the environment model $\hat{M}$ with penalized predicted reward $\tilde{r}_t$ can improve the lower bound of the policy's performance in the online advertising system $M$.
\begin{proposition}[Lower Bound Improvement]
\label{prop:lower_bound}
    For any policy $\pi$, its performance in the online advertising system $M$, denoted as $\eta_M(\pi)$, is lower bounded by its performance in a fitted environment model $\hat{M}$ with the penalized predicted reward $\tilde{r}_t$, denoted as $\eta_{\hat{M}}(\pi)$, i.e., $\eta_M(\pi)\ge\eta_{\hat{M}}(\pi)$.
\end{proposition}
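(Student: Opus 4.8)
The plan is to run the telescoping / simulation-lemma argument standard in pessimistic model-based offline RL (cf.\ the analysis of \cite{yu2020mopo}). Let $\bar{M}$ denote the learned environment model equipped with its \emph{raw} reward prediction $\hat r_t$; it has the same predicted (Gaussian) transitions $\hat P$ as the model $\hat M$ of the statement, which instead uses the penalized reward $\tilde r_t=\hat r_t-\lambda\|\hat\sigma_t\|_F$ from \eqref{equ:target_q}. Let $V^\pi_M$ be the value function of $\pi$ in the true online advertising system $M$; since the horizon $T$ is finite and the per-step reward (the value of won impressions) is bounded, $V^\pi_M$ is bounded. First I would decompose
\begin{align}
\label{equ:pf_lb_decomp}
\eta_M(\pi)-\eta_{\hat M}(\pi)=\underbrace{\big(\eta_M(\pi)-\eta_{\bar{M}}(\pi)\big)}_{\text{(I) model error}}+\underbrace{\big(\eta_{\bar{M}}(\pi)-\eta_{\hat M}(\pi)\big)}_{\text{(II) accrued penalty}}.
\end{align}

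Term (II) is immediate: $\bar{M}$ and $\hat M$ share the same transitions and differ only by subtracting $\lambda\|\hat\sigma_t\|_F$ from the reward at every step, so by linearity of the return along a rollout $\eta_{\bar{M}}(\pi)-\eta_{\hat M}(\pi)=\mathbb{E}_{\pi,\bar{M}}\big[\sum_{t=1}^{T}\gamma^t\lambda\|\hat\sigma_t\|_F\big]\ge 0$. For term (I) I would apply the telescoping identity obtained by rolling $\pi$ in $M$ versus in $\bar{M}$ and summing the per-step gaps:
\begin{align}
\label{equ:pf_telescope}
\eta_M(\pi)-\eta_{\bar{M}}(\pi)=\mathbb{E}_{(\mathbf{s},\mathbf{a})\sim d^\pi_{\bar{M}}}\Big[\big(R(\mathbf{s},\mathbf{a})-\hat r(\mathbf{s},\mathbf{a})\big)+\gamma\big(\mathbb{E}_{P}[V^\pi_M]-\mathbb{E}_{\hat P}[V^\pi_M]\big)\Big],
\end{align}
where $d^\pi_{\bar{M}}$ is the discounted state--action occupancy induced by $\pi$ in $\bar{M}$ and the inner expectations are over the next state drawn from $P(\mathbf{s},\mathbf{a})$ and from $\hat P(\mathbf{s},\mathbf{a})$, respectively. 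Hence $\eta_M(\pi)-\eta_{\bar{M}}(\pi)\ge-\mathbb{E}_{d^\pi_{\bar{M}}}[\varepsilon(\mathbf{s},\mathbf{a})]$, where the one-step model discrepancy $\varepsilon(\mathbf{s},\mathbf{a})\triangleq|R(\mathbf{s},\mathbf{a})-\hat r(\mathbf{s},\mathbf{a})|+\gamma\big|\mathbb{E}_{P}[V^\pi_M]-\mathbb{E}_{\hat P}[V^\pi_M]\big|$ is finite thanks to the boundedness of $V^\pi_M$.

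Substituting both pieces into \eqref{equ:pf_lb_decomp} yields $\eta_M(\pi)\ge\eta_{\hat M}(\pi)+\mathbb{E}_{d^\pi_{\bar{M}}}\big[\lambda\|\hat\sigma(\mathbf{s},\mathbf{a})\|_F-\varepsilon(\mathbf{s},\mathbf{a})\big]$, so what remains is to argue that the computable penalty dominates the true model error, i.e.\ $\lambda\|\hat\sigma(\mathbf{s},\mathbf{a})\|_F\ge\varepsilon(\mathbf{s},\mathbf{a})$ pointwise (or at least in expectation under $d^\pi_{\bar{M}}$). This is the ``admissible error estimator'' condition: the ensemble disagreement $\hat\sigma$ is large precisely on state--action pairs far from $\mathcal{D}_\text{R}$, where the fitted model is unreliable, and small where it is reliable, so taking the constant $\lambda$ large enough makes it hold. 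Under this condition the bracket is nonnegative and $\eta_M(\pi)\ge\eta_{\hat M}(\pi)$ follows. I would close with a one-line monotonicity remark that replacing $\max_{a'}Q(\hat s_{t+1},a')$ by the pessimistic $\min_{\hat s_{t+1}\in\hat M(s,a)}\max_{a'}Q(\hat s_{t+1},a')$ in \eqref{equ:target_q} only lowers the Bellman target and hence the evaluated return, so adding the state penalty preserves the inequality.

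The main obstacle is exactly the step that converts the abstract discrepancy $\varepsilon$ into the controllable penalty $\lambda\|\hat\sigma\|_F$: it cannot be had for free and must be stated as an explicit assumption (the ensemble variance is an admissible error estimator, together with boundedness of $V^\pi_M$ from the finite horizon $T$ and bounded rewards). The remaining ingredients --- the decomposition \eqref{equ:pf_lb_decomp}, the telescoping identity \eqref{equ:pf_telescope}, and the linearity computation for term (II) --- are routine.
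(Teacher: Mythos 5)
Your proposal is correct and follows essentially the same route as the paper: the paper's proof applies the telescoping lemma of \cite{luo2018algorithmic} (Lemma~\ref{lemma:telescope}) under $d^\pi_{\hat M}$ and then invokes the MOPO-style property that the uncertainty term $\lambda\|\hat\sigma_t\|_F$ upper-bounds $\gamma|G^\pi_{\hat M}|$, which is exactly your ``admissible error estimator'' condition, so the key ingredients coincide. Your only deviations are cosmetic --- routing through the intermediate unpenalized model $\bar M$ and explicitly carrying the reward-prediction error $|R-\hat r|$, which the paper folds away by taking the model reward equal to $R$ in Lemma~\ref{lemma:telescope} --- and you correctly identify that the dominance of the penalty over the model error is an assumption rather than a free consequence, which the paper also treats as a cited property of the ensemble variance.
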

The proof is given in Appendix \ref{app:proof_lower_bound}, where we leverage the proof technique in \cite{yu2020mopo}.

\textbf{Penalized Imaginary Data:}
After adding the penalty to the predicted reward $\hat{r}_t$ in the imaginary data, we further obtain the penalized imaginary data $\mathcal{D}_\text{I}^p=\{(\mathbf{s}_t,\mathbf{a}_t,\tilde{r}_t,\hat{\mathbf{s}}_{t+1})\}$.
The penalized imaginary data $\mathcal{D}_\text{I}^p$ and the real data $\mathcal{D}_\text{R}$ together constitute the training dataset used for policy training. 

\textbf{Policy Training:} We leverage an actor-critic framework in training the policy, where the Q function and the policy are updated iteratively \cite{haarnoja2018soft}.
The loss function of the policy is its negative return, which is exactly $-Q^{\pi_w}$, i.e., 
\begin{align}
\label{equ:learn_policy}
l(w)=\mathbb{E}_{\mathcal{D}_\text{R}\cup\mathcal{D}^p_\text{I}}\bigg[-Q^{\pi_w}(o_t,a_{N,t})\bigg].
\end{align}
where $w$ denotes the parameter of the policy.
We use the gradient descent to update $w$, i.e.,
\begin{align}
\label{equ:policy_update}
    w\leftarrow w-\eta\nabla_w l(w),
\end{align}
where $\eta>0$ is the step size.

\subsection{Overall Algorithm}
The overall algorithm is described in Algorithm \ref{alg:PE-MORL}. Specifically, we first learn a PE environment model $\hat{M}$ based on the real data $\mathcal{D}_\text{R}$.
Then we train the Q function with the target \eqref{equ:target_q} and learn the
policy of the representation advertiser with \eqref{equ:learn_policy} based on the imaginary data and the real data $\mathcal{D}_\text{R}\cup\mathcal{D}^p_\text{I}$.
Specifically, at each iteration,
we interact with the trained equivariant environment model $\hat{M}$ using the policy $\pi_w$ \footnote{Note that we assume the policies of all the background advertisers are known in advance.}.
The imaginary data $(\mathbf{s}_t,\mathbf{a}_t, \hat{r}_t,\hat{\mathbf{s}}_{t+1})$ will be generated during the interactions. After penalizing the reward, we store $(\mathbf{s}_t,\mathbf{a}_t, \tilde{r}_t,\hat{\mathbf{s}}_{t+1})$ into the penalized imaginary data $\mathcal{D}^p_\text{I}$.
With $\mathcal{D}_\text{R}\cup\mathcal{D}^p_\text{I}$, we update the policy parameter $w$ based on \eqref{equ:policy_update}.
Update the policy parameter $w$ until the return of ${\pi}_w$ in $\hat{M}$ converges. 
Then we output ${\pi}_w$ as an approximate solution to \eqref{equ:prob}.

\begin{algorithm}[tb]
	\caption{PE-MORL}
	\label{alg:PE-MORL}
	\begin{algorithmic}[1]
 \REQUIRE The real data $\mathcal{D}_\text{R}$, hyper-parameters $\lambda,\alpha,\beta,\eta$;\\
	\ENSURE ${\pi}_w$ (that can be viewed as the solution to \eqref{equ:prob}), ;\\
 \STATE Initialize the penalized imaginary data $\mathcal{D}^p_\text{I}\leftarrow\emptyset$ and the policy parameter $w$;
	\STATE Learn an equivariant environment model $\hat{M}$ based on the real data $\mathcal{D}_\text{R}$ with \eqref{equ:update_env};
 \REPEAT
 \STATE Interact with $\hat{M}$ using ${\pi}_w$ to generate the  imaginary data $(\mathbf{s}_t,\mathbf{a}_t, \hat{r}_t,\hat{\mathbf{s}}_{t+1})$;
 \STATE Obtain the penalized predicted reward $\tilde{r}_t$ 
 and store $(\mathbf{s}_t,\mathbf{a}_t, \tilde{r}_t,\hat{\mathbf{s}}_{t+1})$ in $\mathcal{D}_\text{I}^p$;
 \STATE Update the Q function $Q^{\pi_w}$ with loss function \eqref{equ:Q_loss}, where the target Q value leverages \eqref{equ:target_q}.
 \STATE Update ${\pi}_w$ with $\mathcal{D}_\text{R}\cup\mathcal{D}^p_\text{I}$ based on \eqref{equ:policy_update};
 \UNTIL{The expected return of ${\pi}_w$ in $\hat{M}$ converges.}
	\end{algorithmic}
\end{algorithm}
\begin{table*}[t]
	\caption{A/B Testing in the real-world experiments: comparing V-CQL, USCB, and MBAB with PE-MORL: $3k$ representative advertisers lasting for $10$ days.}
 \vspace{-6mm}
 \newcommand\xrowht[2][0]{\addstackgap[.5\dimexpr#2\relax]{\vphantom{#1}}}
	\label{table:online_exp}
	\vskip 0.15in
	\begin{center}
		\begin{small}
			\begin{tabular}{cccccccccccc}
				\toprule
				Algorithms&GMV& ROI & Cost &Algorithms&GMV& ROI & Cost & Algorithms&GMV& ROI & Cost \\
				\toprule
    V-CQL &6,313,011&4.50&1,402,298 &USCB &6,059,062&5.09&1,190,859&
    MBAB &7,200,474&4.57&1,574,216
    \\
    \midrule
    \makecell[c]{\textbf{PE-MORL} \\\textbf{(Ours)}}
    &\textbf{6,591,894}&\textbf{4.68}&\textbf{1,409,574}
    & \makecell[c]{\textbf{PE-MORL} \\\textbf{(Ours)}}
    &\textbf{6,497,634}&\textbf{5.41}&\textbf{1,201,845}
    & \makecell[c]{\textbf{PE-MORL} \\\textbf{(Ours)}}
    &\textbf{7,482,550}&\textbf{4.83}&\textbf{1,548,506}\\
    \midrule
    Diff& \textbf{+4.4\%}&\textbf{+3.9\%}&\textbf{+0.5\%}
    &Diff& \textbf{+7.2\%}&\textbf{+6.3\%}&\textbf{+0.9\%}
    &Diff& \textbf{+3.9\%}&\textbf{+5.7\%}&{-1.6\%}\\
\bottomrule

			\end{tabular}
		\end{small}
	\end{center}
	\vskip -0.1in
\end{table*}
\section{Experiments}
\label{sec:exp}

We conduct extensive real-world experiments to validate the effectiveness of our approach. We mainly study the following four questions in the experiments: (1) What is the overall performance of the PE-MORL algorithm compared to the state-of-the-art auto-bidding algorithms? (2) What is the performance of the equivariant environment model compared to the commonly used bidding environment? Does the equivariance design help with its generalization ability?
(3) Can the proposed model-based approach, PE-MORL, truly enable policies to transcend the limitations imposed by the real data $\mathcal{D}_\text{R}$ compared with the ORLB algorithm?
(4) Does the uncertainty penalty work?

\textbf{Experiment Setup.} We conduct real-world experiments on the online advertising system of one of the world's largest E-commerce platforms, TaoBao. 
We leverage A/B Testing to evaluate our approach on thousands of representative advertisers each with thousands of background advertisers with fixed local policies.
An episode is set to be one day. There are $T=48$ time steps, and the duration between two-time steps is half an hour.

\textbf{Performance Metric.} The main performance metric in our experiments is the objective function, which represents the total value of impression opportunities won by the advertiser in an episode.
This metric is subsequently referred to as the `GMV' throughout the following sections.
Besides, we use other two indexes that are commonly used in the auto-bidding problem to evaluate the performance of the policy. The first metric is the total consumed budget (Cost) of the advertiser. The second metric is the return on investment (ROI) which is defined as the ratio between the GMV and the Cost of the advertiser. Note that larger values of GMV, ROI, and no significant decrease in Cost indicate better performance of the auto-bidding policy.
As for the environment models, we utilize the mean square error (MSE) and the mean absolute error (MAE) to evaluate their performance.

\textbf{Baselines.} We compare the proposed PE-MORL with the state-of-the-art auto-bidding algorithms, including the algorithm based on \textbf{the SRLB: USCB} \cite{he2021unified} that trains policies in a commonly used GSP-based offline simulator, the algorithm based on \textbf{the ORLB:
V-CQL} \footnote{V-CQL is the SORL proposed by \cite{mou2022sustainable} without online explorations. Note that we focus on the offline algorithms and the online exploration is out of scope.} \cite{mou2022sustainable}, a model-free offline RL algorithm; and a \textbf{vanilla MRLB algorithm: MBAB \cite{chen2023model}}.
In the ablation study, 
to validate the effectiveness of the proposed equivariant environment model $\hat{M}$, we compare it with the GSP-based offline simulator and a neural network-based environment model without the design of\eqref{equ:design_of_model}, constructed by only fully connected layers.

\subsection{Real-World Experiments}

\textbf{To Answer Question (1):} 
We conduct A/B Testing on the real-world online advertising system between the proposed PE-MORL and the state-of-the-art baselines for thousands of representative advertisers over multiple days. The results are shown in Table \ref{table:online_exp}.
We can see that the GMV of the PE-MORL significantly exceeds those in the V-CQL, USCB, and MBAB. 
Besides, nearly all other performance metrics, including ROI and Cost of the PE-MORL exceed those in the baselines. 
These validate the superiority of the proposed PE-MORL.

\textbf{To Answer Question (2):} 
We first compare the proposed equivariant environment model $\hat{M}$ with a neural network-based model without the design of \eqref{equ:design_of_model}.
Both models are trained on real data collected from the online advertising system in $7$ days, and evaluated on another real data. The results are given in Table \ref{table:compare_env_model}. We can see that the test loss of $\hat{M}$ is lower than that of the environment model without the design of \eqref{equ:design_of_model}.
Moreover, we compare $\hat{M}$ with the GSP-based offline simulator that is commonly used in the SRLB, and the results are given in Table. \ref{table:compare_env_model_gsp}.
We can see that the equivariant environment model $\hat{M}$ can significantly improve the prediction accuracy on both the next state and reward compared to the GSP-based offline simulator.
\begin{table}[t]
	\caption{The test MAE and MSE losses of the environment model constructed by the fully connected neural network without the design of \eqref{equ:design_of_model} and the designed PE environment model.}
 \vspace{-5mm}
	\label{table:compare_env_model}
	\vskip 0.15in
	\begin{center}
		\begin{small}
			\begin{tabular}{cccc}
				\toprule
				Test Loss&\makecell[c]{Fully-connected \\ Environment \\Model  without \eqref{equ:design_of_model}}& \makecell[c]{PE \\ Environment  \\Model $\hat{M}$}  & Diff \\
				\toprule
				MAE&0.404&\textbf{0.381}&\textbf{-5.9\%}
				\\
    \midrule
    MSE &1.469&\textbf{1.420}&\textbf{-3.3\%}\\
				\bottomrule
			\end{tabular}
		\end{small}
	\end{center}
	\vskip -0.1in
\end{table}

\begin{table}[t]
	\caption{The test MAE and MSE losses of the GSP-based offline simulator and the designed PE environment model.}
  \vspace{-5mm}
	\label{table:compare_env_model_gsp}
	\vskip 0.15in
	\begin{center}
		\begin{small}
			\begin{tabular}{cccc}
				\toprule
				Test Loss&\makecell[c]{GSP-based \\Offline \\
    Simulator}& \makecell[c]{PE \\ Environment  \\Model $\hat{M}$}  & Diff \\
				\toprule
				MAE&1.253&\textbf{0.381}&\textbf{-69.6\%}
				\\
    \midrule
    MSE &12.034&\textbf{1.420}&\textbf{-88.2\%}\\
				\bottomrule
			\end{tabular}
		\end{small}
	\end{center}
	\vskip -0.1in
\end{table}

\begin{figure}[t]
	\vskip 0.2in
	\begin{center}
		\centerline{\includegraphics[width=70mm]{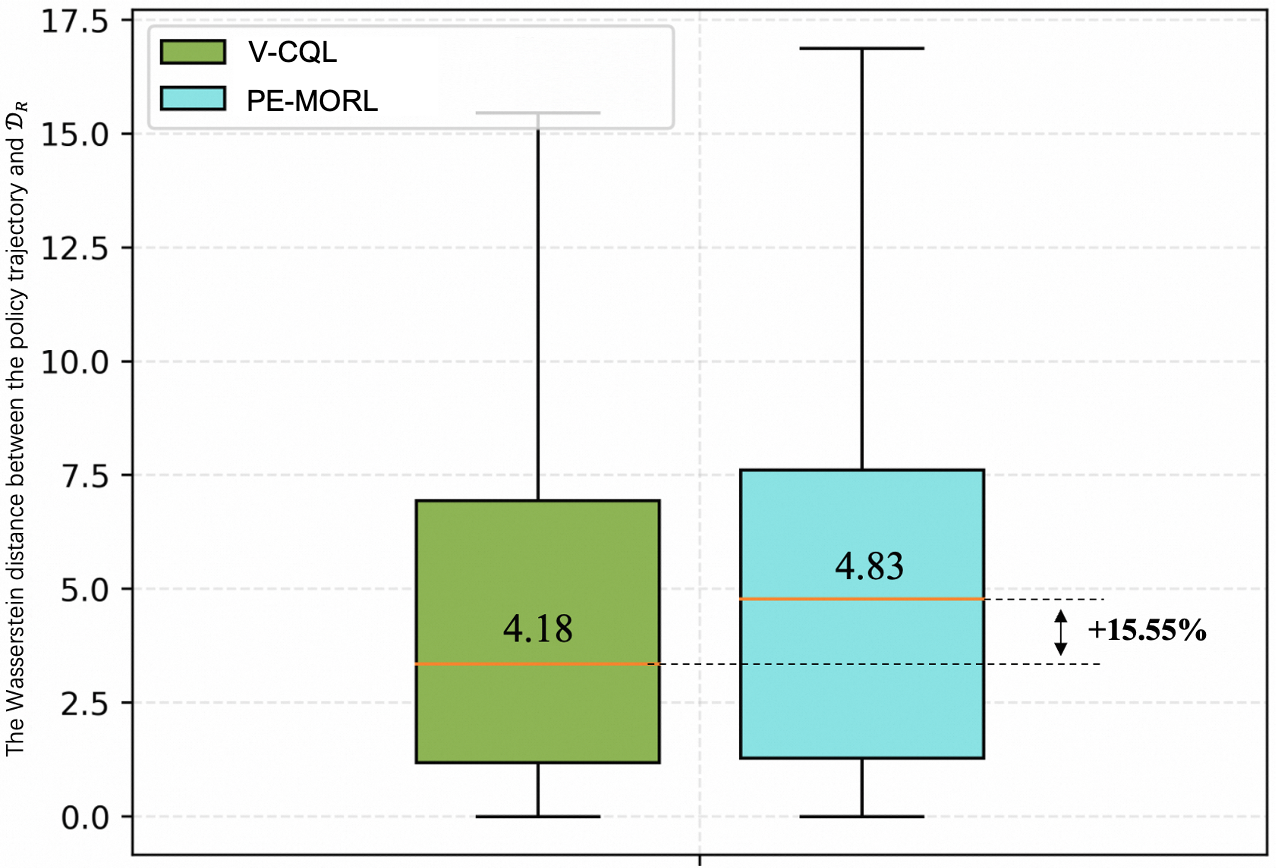}}
		\caption{The Wasserstein distance between the policy trajectories and the real data $\mathcal{D}_\text{R}$.}
		\label{fig:distance}
	\end{center}
	
\end{figure}

\textbf{To Answer Question (3):} 
We compare the distance between the real data $\mathcal{D}_\text{R}$ and the policy trajectories trained by the ORLB algorithm, V-CQL, denoted as $d_{V-CQL}$, and the PE-MORL algorithm, denoted as $d_\text{PE-MORL}$, respectively. The distance is calculated by the Wasserstein metric, and the results are given in Fig. \ref{fig:distance}.
We can see that the average value of $d_\text{PE-MORL}$ is $15.55\%$ higher than $d_\text{V-CQL}$, which indicates that the PE-MORL can indeed break through the real data $\mathcal{D}_\text{R}$ to find better policies. 
\textbf{Visualizations} of the policy trajectories are given in Fig. \ref{fig:distance_show_case}, and the corresponding performances of each single representation advertiser are given in Table. \ref{table:online_exp_show_case}. We can see that the trajectories of policies trained by the PE-MORL are further from the real data and exhibit better performance.
\subsection{Ablation Study}
\textbf{To Answer Question (4):} 
To validate the effectiveness of the uncertainty penalty design, we evaluate the policy under the different values of $\lambda$, and the results are given in Table. \ref{table:ablation_study}. Note that $R/R^*$ and online rate are commonly used offline metrics in the auto-bidding \cite{he2021unified}. Lower $R/R^*$ and online rate indicate the poorer performance of the policy. 
We can see that a smaller value of $\lambda$ can lead to poor policy performance. When $\lambda=0$, there is no uncertainty penalty in the PE-MORL, and the trained policy performs the worst.
This validates the effectiveness of the uncertainty penalty method.

\begin{table}[t]
	\caption{A/B Testing in the real-world experiments between V-CQL and PE-MORL: $2$ representative advertisers lasting for $5$ days.}
\vskip -0.2in
 \newcommand\xrowht[2][0]{\addstackgap[.5\dimexpr#2\relax]{\vphantom{#1}}}
	\label{table:online_exp_show_case}
	\vskip 0.15in
	\begin{center}
		\begin{small}
			\begin{tabular}{cccccc}
				\toprule
    \makecell[c]{Representation \\ Advertiser}
			&	Algorithms&GMV& ROI & Cost \\
				\toprule\xrowht{10pt}
   \multirow{3}{*}{Left in Fig. \ref{fig:distance_show_case}}& V-CQL &16,940&2.25&7,543\\
    \cline{2-5}\xrowht{10pt}
   & \makecell[c]{\textbf{PE-MORL} \textbf{(Ours)}}
    &\textbf{23,995}&\textbf{3.26}&\textbf{7,365}\\
    \cline{2-5}\xrowht{10pt}
   & Diff& \textbf{+41.7\%}&\textbf{+45.1\%}&-2.4\%\\
\bottomrule\xrowht{10pt}
 \multirow{3}{*}{Right in Fig. \ref{fig:distance_show_case}}& V-CQL &241&1.37&175\\
    \cline{2-5}\xrowht{10pt}
   & \makecell[c]{\textbf{PE-MORL} \textbf{(Ours)}}
    &\textbf{502}&\textbf{2.70}&\textbf{186}\\
    \cline{2-5}\xrowht{10pt}
   & Diff& \textbf{+108.5\%}&\textbf{+96.8\%}&\textbf{+6.0\%}\\
\bottomrule
			\end{tabular}
		\end{small}
	\end{center}
	\vskip -0.1in
\end{table}

\begin{table}[t]
	\caption{Ablation study under different $\lambda$.}
 \vskip -0.2in
	\label{table:ablation_study}
	\vskip 0.15in
	\begin{center}
		\begin{small}
			\begin{tabular}{cccccc}
				\toprule
				$\lambda$& $R/R^*$&  Online Rate &$\lambda$& $R/R^*$&  Online Rate \\
				\toprule
				5&\textbf{0.8786}&\textbf{0.834} &  2 & 0.7714&0.783
				\\
    \midrule
    4&\textbf{0.8788}&\textbf{0.822}& 1 & 0.6493&0.401\\
    \midrule
    3 & \textbf{0.8831}&\textbf{0.835} &0&0.4519&0.412\\
				\bottomrule
			\end{tabular}
		\end{small}
	\end{center}
	\vskip -0.1in
 
\end{table}

\section{Conclusions}
\label{sec:conclusion}
In this paper, we analyze a recent paradigm shift in the RL-based auto-bidding technique: from the SRLB to the ORLB and point out the problems existing in both paradigms. 
We then introduce the MRLB, a promising paradigm to solve the problems in both the SRLB and the ORLB. However, the vanilla MRLB algorithm may still have poor performance due to the OOD challenge caused by the fixed real data. Hence, we propose a specific algorithm based on the MRLB, named the PE-MORL, that can further address the OOD challenge.
Compared with the vanilla algorithm derived by the MRLB, the PE-MORL features two novel designs. Firstly, we design the environment model as a PE neural network with theoretical guarantees to increase its generalization ability. Secondly, we design a penalty method on the predicted reward of the environment model. The penalty method can alleviate the negative impact on policy training caused by overestimating the predicted rewards.   
Extensive real-world experiments validate the superiority of the proposed PE-MORL algorithm compared with the state-of-the-art auto-bidding policies.
Particularly, we also validate that the designed environment model has better generalization ability compared to a vanilla neural network that is not permutation equivariant and the GSP-based offline simulator commonly used in the SRLB.

\begin{figure}[t]
	\vskip 0.2in
	\begin{center}
		\centerline{\includegraphics[width=83mm]{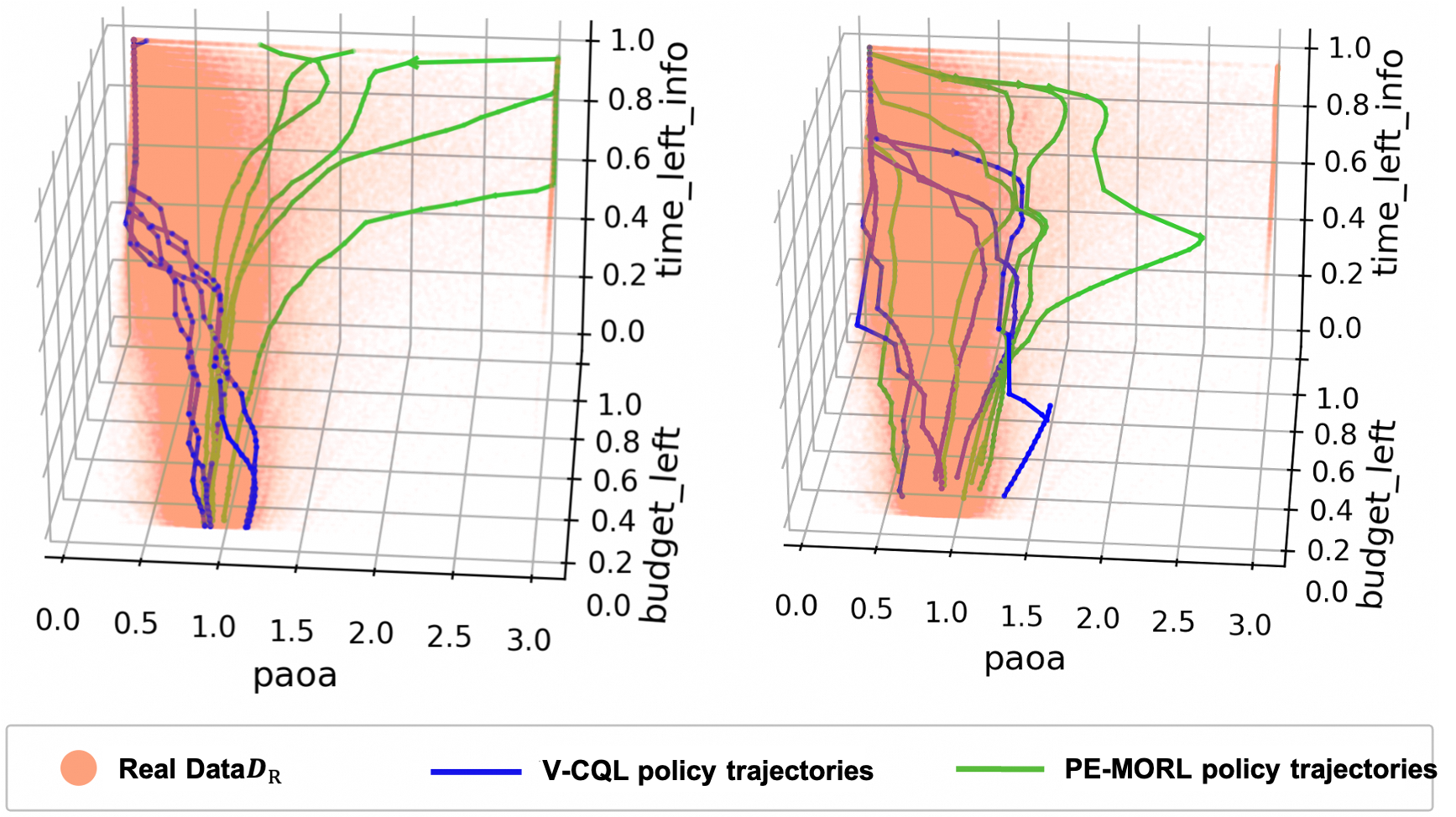}}
		\caption{Policy trajectories of two specific representative advertisers trained by V-CQL and PE-MORL algorithms.}
		\label{fig:distance_show_case}
	\end{center}
	\vskip -0.2in
\end{figure}

\begin{acks}
To Robert, for the bagels and explaining CMYK and color spaces.
\end{acks}

\bibliographystyle{ACM-Reference-Format}
\bibliography{model_based_RL_bidding}

\appendix
\section{Related Works}
\label{app:related_work}
\subsection{Mean Field Game Algorithms}

Mean field game (MFG) theory studies decision-making for a large population of homogeneous agents with small interactions. The basic idea of the MFG is to model the game of all agents as a game of two players intertwined with each other, including a representative agent and the empirical distribution of all other agents. The MFG algorithms usually look for the optimal policy in terms of social welfare under cooperative settings \cite{1,li2021permutation} or Nash Equilibrium policies under competitive/general-sum setting \cite{3}. For example, [1] designs a pessimistic value iteration method under the offline setting to maximize social welfare, and \cite{3} proposes a Q-learning-based algorithm with better convergence stabilities and learning accuracies. However, in this paper, we study the auto-bidding problem from the perspective of a single advertiser (representative advertiser) where other advertisers' (background advertisers) policies are assumed to be fixed and known. (Note that this assumption is viable in practice since the auto-bidding policy designer, usually the commercial department of an online advertising platform, can access the bidding policy of all advertisers, and there is no ethical issue since the observation of the considered advertiser's policy involves no information on other advertisers.)

\subsection{Offline Reinforcement Learning}

Offline RL (also known as batch RL) aims to learn better policies based on a fixed offline dataset collected by some behavior policies. The main challenge offline RL addressed is the extrapolation error (also known as the out-of-distribution, OOD) caused by missing data. The key technique to address the extrapolation error is to be pessimistic about the state-action pairs outside the offline dataset and conservatively train the policy. Similar to the traditional online RL algorithms, the offline RL algorithms can be divided into the model-free offline RL and the model-based offline RL.

Model-free Offline RL directly trains the policy with the offline dataset without building or learning an environment model. Based on the specific ways of addressing the extrapolation error, the model-free offline RL algorithms can be further divided into policy constraint methods such as BCQ \cite{5}, BEAR \cite{6}, and conservative regularization methods such as CQL \cite{kumar2020conservative}, as well as the constrained stationary distribution methods such as AlgaeDICE \cite{8} and OptiDice \cite{9}. Note that the CQL acts as a strong baseline and has been applied to real-world auto-bidding systems [10].

Model-based Offline RL, in contrast, first learns an environment model and then trains the policy with it. Note that model-free offline RL algorithms can only learn on the states in the offline dataset, leading to overly conservative algorithms. Nonetheless, the model-based offline RL algorithms have the potential for broader generalization by generating and training on additional imaginary data \cite{11}. The model-based offline algorithms can be divided into two categories. The first category is to build a pessimistic environment model based on the uncertainty estimations, such as MOPO \cite{yu2020mopo}, and MOReL \cite{13}. The second one is to apply pessimistic learning methods, e.g., conservative Q learning methods, when training with the learned environment model, such as the COMBO \cite{11} and the H2O \cite{14}. A key factor of model-based offline RL algorithms is the generalization ability of the learned environment model since the quality of the generated imaginary data largely determines the policy training. In this paper, we build a PE environment model, taking advantage of the generalization ability of the PE neural networks.

\subsection{Permutation Equivariance RL Algorithms}

A permutation equivariant (PE) function is a type of function that maintains the same output structure under the reordering of its inputs. Given a set of inputs, if we apply a permutation (rearrange the order of the inputs), a permutation equivariant function will produce a set of outputs that are permuted in the same way as the inputs. A permutation invariant (PI) function is a type of function that produces the same output regardless of the order of its inputs. In other words, the output of the function is unchanged under any permutation of the input elements. The PE/PI functions can be implemented by many structures, such as the DeepSet \cite{zaheer2017deep}, and a set of transformers \cite{18}, and have been used in many areas, such as RL, computer visions, and natrual language processings \cite{19,20}. However, most of the previous works employ PE/PI neural networks in modeling policies or value functions in RL algorithms \cite{16,17,21}. For example, \cite{16} uses the Deep Set to model the policies of the UAVs, taking advantage of the variable input dimensions and PI property of the Deep Set, and the policies are trained in a simulated environment implemented by the OpenAI Gym interface that can be freely interacted with. \cite{17} leverages the Deep Set and attention-based neural networks to encode the neighborhood information in the quadrotors' policies and value functions, and the algorithm is trained in a simulated environment that can be freely interacted with. In this paper, we utilize them to model the environment for the agent in RL algorithms, additionally taking advantage of their generalization abilities. Besides, we also highlight that

We are the first to prove the permutation equivariant/invariant (PE/PI) properties of the advertiser's transition rule and the reward function under a typical industrial auction mechanism that involves multiple stages (Proposition 5.1), which provides the basis for the PE environment model design.
We are the first to both theoretically (E.q. \eqref{equ:upper_bound_pe}) and empirically (Table 2 and Table 3) demonstrate the generalization superiority of PE/PI neural networks in modeling the environment in the auto-bidding field.

\section{Notations and Definitions}
\label{app:notations}
\textbf{Notations:}
$\mathbb{N}_+$ and $\mathbb{R}_+$ represent the positive integer set and the positive real scalar set, respectively;
$\cdot^\top$ represents the transpose operator;
$[N]$ denotes the positive integer set containing $1$ to $N$, where $N\in\mathbb{N}_+$;
$[\cdot]_+\triangleq\max\{\cdot, 0\}$; $|\mathcal{D}|$ represents the number of elements in the set $\mathcal{D}$; $\circ$ is the function composition operation; $\left\|\cdot\right\|_1$ denotes the 1-norm;
$\cup$ denotes the union operator between sets, and $\setminus$ denotes the subtraction operator between sets;

We next give some definitions that will be used in lateral proves.
Based on the permutation operator, we can define the \emph{orbit averaging} operator as follows.
\begin{definition}[Orbit Averaging]
	\label{def:orbit_averaging}
	An orbit averaging $\mathcal{Q}$ is an operator acting on functions $f:\mathcal{X}\rightarrow\mathcal{Y}$, i.e.,
	\begin{align}
		\mathcal{Q}f=\frac{1}{N!}\sum_{\rho\in\Omega_N}\rho^{-1}\circ f \circ \rho,
	\end{align}
	where $\mathcal{X}$ and $\mathcal{Y}$ are the spaces of $N$-dimensional ordered vectors.
\end{definition}

Moreover, as we need to analyze the generalization bound for the design environment model that is measured by the \emph{r-covering} number based on the $l_\infty$-norm \cite{duan2023equivariant}, we formally define them as follows.
\begin{definition}[$l_\infty$-Norm]
	The $l_\infty$ norm between two functions $f,g:\mathcal{X}\rightarrow\mathcal{Y}$ is defined as $\|f-g\|_\infty\triangleq\max_{\mathbf{x}\in\mathcal{X}}\left\|f(\mathbf{x})-g(\mathbf{x})\right\|_1$.
\end{definition}
\begin{definition}[R-covering Number]
	\label{def:r_cover}
	Given a function set $\mathcal{F}$, the function set $\mathcal{F}'$ is said to r-covers $\mathcal{F}$ under $l_\infty$ norm if for any function $f\in\mathcal{F}$, there exists $f'\in\mathcal{F}'$, such that $l_\infty(f,f')\le r$. The $r$-covering number of a function set $\mathcal{F}$, denoted as $\mathcal{N}_\infty(\mathcal{F},r)$, is the cardinality of the smallest function set that $r$-covers $\mathcal{F}$ under $l_\infty$ norm.
\end{definition}

\section{Proof of Lemmas}

\subsection{Proof of Lemma \ref{lemma:orbit_averaging}}
\begin{lemma}
The orbit averaging $\mathcal{Q}$ can make any function $f:\mathcal{X}\rightarrow\mathcal{Y}$ a PE function, i.e., 
$\rho(\mathcal{Q}f(\mathbf{x}))=\mathcal{Q}f(\rho\mathbf{x})$, for any $\mathbf{x}\in\mathcal{X}$. Moreover, if $f$ is a PE function, then $\mathcal{Q}f=f$.
\end{lemma}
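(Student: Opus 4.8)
The plan is a standard group-averaging (Reynolds operator) argument, resting on two elementary facts: (i) $\Omega_N$ is a group under composition, so left- or right-multiplication by a fixed permutation is a bijection of $\Omega_N$ onto itself; and (ii) the action of a permutation operator on an ordered vector merely reorders coordinates, hence is linear and commutes with the finite average $\frac{1}{N!}\sum_{\rho\in\Omega_N}$. I would state these two facts up front and then use them to dispatch both halves of the claim.

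For the equivariance part, I would fix an arbitrary $\sigma\in\Omega_N$ (using $\sigma$ rather than $\rho$ to avoid clashing with the summation index) and unfold the definition: $\mathcal{Q}f(\sigma\mathbf{x})=\frac{1}{N!}\sum_{\rho\in\Omega_N}\rho^{-1}\big(f(\rho\sigma\mathbf{x})\big)$. Then re-index by $\tau=\rho\sigma$; since $\Omega_N$ is a group, $\tau$ runs over all of $\Omega_N$ exactly once as $\rho$ does, and $\rho^{-1}=\sigma\tau^{-1}$. Substituting gives $\mathcal{Q}f(\sigma\mathbf{x})=\frac{1}{N!}\sum_{\tau\in\Omega_N}\sigma\big(\tau^{-1}(f(\tau\mathbf{x}))\big)$, and pulling the linear operator $\sigma$ outside the average yields $\sigma\big(\mathcal{Q}f(\mathbf{x})\big)$, which is exactly the PE property in the sense of the definition of permutation equivariance.

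For the second part, I would substitute the equivariance hypothesis $f(\rho\mathbf{x})=\rho f(\mathbf{x})$ directly into $\mathcal{Q}f(\mathbf{x})=\frac{1}{N!}\sum_{\rho\in\Omega_N}\rho^{-1}\big(f(\rho\mathbf{x})\big)$: each summand becomes $\rho^{-1}\big(\rho f(\mathbf{x})\big)=f(\mathbf{x})$, so the average of $N!$ identical terms collapses to $f(\mathbf{x})$, giving $\mathcal{Q}f=f$.

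There is no genuinely hard step here; the only point requiring care is the change of variables in the sum, where one must explicitly invoke closure of $\Omega_N$ under composition and inversion so that $\rho\mapsto\rho\sigma$ is a permutation of the index set, together with the (essentially trivial) observation that permutation operators act linearly on coordinates, which is what licenses moving $\sigma$ through the summation. I would keep the write-up to a few lines, emphasizing these two structural facts.
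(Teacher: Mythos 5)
Your proposal is correct and follows essentially the same route as the paper's proof: the re-indexing $\tau=\rho\sigma$ over the group $\Omega_N$ together with the factorization $\rho^{-1}=\sigma\tau^{-1}$ (the paper writes $\rho_1^{-1}=\rho(\rho_1\rho)^{-1}$) and pulling the fixed permutation outside the average, followed by the same term-by-term collapse $\rho^{-1}\rho f(\mathbf{x})=f(\mathbf{x})$ for the second claim. The only difference is cosmetic: you make explicit the linearity of the permutation action and the bijectivity of right-multiplication, which the paper states only in a brief remark.
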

\begin{proof}
    Following the definition of the orbit averaging in Definition \ref{def:orbit_averaging}, we have:
    \begin{align}
        \mathcal{Q}f(\rho\mathbf{x})&=\frac{1}{N!}\sum_{\rho_1\in\Omega_N}\rho_1^{-1}f(\rho_1\rho\mathbf{x})\notag\\
        &=\rho\bigg(\frac{1}{N!}\sum_{\rho_1\in\Omega_N}(\rho_1\rho)^{-1}f(\rho_1\rho\mathbf{x})\bigg)\notag\\
        &=\rho(\mathcal{Q}f(\mathbf{x})).
    \end{align}
    Note that $\rho_1\rho$ is also a permutation operator in $\Omega_N$, and when iterating $\rho_1$ through $\Omega_N$, $\rho_1\rho$ will also iterate through $\Omega_N$ without repetition.
    Besides, if $f$ is already a PE function, we have:
    \begin{align}
        \mathcal{Q}f(\mathbf{x})=\frac{1}{N!}\sum_{\rho\in\Omega_N}\rho^{-1}f(\rho\mathbf{x})=\frac{1}{N!}\sum_{\rho\in\Omega_N}\rho^{-1}\rho f(\mathbf{x})=f(\mathbf{x}).
    \end{align}
    This completes the proof.
\end{proof}
\label{app:proof_lemma_oa}

\subsection{Lemmas on Permutation Operators}
Recall that the permutation operator $\rho$ can swap elements in an ordered vector $\mathbf{a}\in\mathbb{R}^N$, i.e.,  
\begin{align}
	\rho\mathbf{a}=\rho(a_1,a_2,\cdots,a_N)=(a_{\rho^{-1}(1)},a_{\rho^{-1}(2)},\cdots,a_{\rho^{-1}(N)}).
\end{align}
As a matrix $\mathbf{A}\in\mathbb{R}^{N\times N}$ can be viewed as an order vector composed of the row vectors $\mathbf{a}_{r,j}\in\mathbb{R}^{1\times N}, i,j\in [N]$, i.e., $\mathbf{A}=(\mathbf{a}_{r,1:N}^\top)$, 
we can define the \emph{row permutation operator} $\rho_r$ on $\mathbf{A}$ as swapping its row vectors following $\rho$, i.e.,
\begin{align}
	\rho_r \mathbf{A}\triangleq \rho(\mathbf{a}_{r,1:N}^\top)=\begin{bmatrix} a_{1\rho^{-1}(1)} & a_{1\rho^{-1}(2)}  &\cdots & a_{1\rho^{-1}(N)} 
		\\ a_{2\rho^{-1}(1)} & a_{2\rho^{-1}(2)} & \cdots& a_{2\rho^{-1}(N)}
		\\
		\vdots&\vdots&\ddots&\vdots
		\\
		a_{N\rho^{-1}(1)} & a_{N\rho^{-1}(2)} & \cdots& a_{N\rho^{-1}(N)}
	\end{bmatrix},
\end{align}
where $a_{ij}$ denotes the element in the $i$-th row and the $j$-th column of $\mathbf{A}$.
Similarly, we can define the \emph{column permutation operator} $\rho_c$ on $\mathbf{A}$ as swapping its column vectors $\mathbf{a}_{c,i}\in\mathbb{R}^{N\times 1}$ following $\rho$, i.e.,
\begin{align}
	\label{equ:c_swap}
	 \mathbf{A}\rho_c\triangleq \rho(\mathbf{a}_{c,1:N})^\top=\begin{bmatrix} a_{\rho^{-1}(1)1} & a_{\rho^{-1}(1)2}  &\cdots & a_{\rho^{-1}(1)N} 
	 	\\ a_{\rho^{-1}(2)1} & a_{\rho^{-1}(2)2} & \cdots& a_{\rho^{-1}(2)N}
	 	\\
	 	\vdots&\vdots&\ddots&\vdots
	 	\\
	 	a_{\rho^{-1}(N)1} & a_{\rho^{-1}(N)2} & \cdots& a_{\rho^{-1}(N)N}
	 \end{bmatrix}.
\end{align}
Note that $\rho_c$ and $\rho_r$ are usually implemented by the \textbf{Permutation Matrix}, where every row and column has exactly a coefficient $1$ and the rest coefficients are all $0$'s.
Hence, we view $\rho_r$ and $\rho_c$ as matrices in the following. 
We next provide some lemmas that will be used later.
\begin{lemma}
	\label{lemma:i}
	It holds that $\rho_c^\top\rho_c=\mathbf{I}$ and $\rho_r^\top\rho_r=\mathbf{I}$.
\end{lemma}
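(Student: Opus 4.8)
The plan is to observe that both $\rho_r$ and $\rho_c$ are \emph{permutation matrices} --- square $0$--$1$ matrices with exactly one entry equal to $1$ in each row and exactly one entry equal to $1$ in each column --- and then to exploit the fact that a permutation matrix is orthogonal. This structure is immediate from the explicit form in \eqref{equ:c_swap} and the analogous display defining $\rho_r$: each of these matrices acts on a vector by relocating its coordinates according to the bijection $\rho$, so in each row (resp.\ column) there is precisely one coordinate that is ``sent somewhere,'' producing a single $1$. Since the identity for $\rho_c$ and the identity for $\rho_r$ have proofs of exactly the same shape, I would establish the statement for a generic permutation matrix $\mathbf{P}$ and then instantiate.

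First I would record that the columns $\mathbf{p}_{1},\dots,\mathbf{p}_{N}$ of $\mathbf{P}$ are standard basis vectors of $\mathbb{R}^{N}$, and that they are \emph{pairwise distinct}: if two columns were equal, their common $1$ would lie in the same row, contradicting the fact that every row of $\mathbf{P}$ contains exactly one $1$. Next I would compute the $(i,j)$ entry of $\mathbf{P}^{\top}\mathbf{P}$, which is the inner product $\mathbf{p}_{i}^{\top}\mathbf{p}_{j}$ of two standard basis vectors: this equals $1$ when $i=j$ and $0$ when $i\neq j$ (using distinctness for the off-diagonal case). Hence $\mathbf{P}^{\top}\mathbf{P}=\mathbf{I}$, and applying this to $\mathbf{P}=\rho_{c}$ and $\mathbf{P}=\rho_{r}$ gives the claim.

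There is no genuine obstacle here; the only point needing care is extracting from the defining displays that $\rho_r$ and $\rho_c$ really do have exactly one $1$ per row and per column --- equivalently, that $\rho$ and its inverse are bijections of $[N]$ --- after which the result is a one-line orthogonality computation. Alternatively, one could argue purely index-theoretically: writing $(\rho_c)_{ji}$ as the entry that vanishes unless $i=\rho^{-1}(j)$ (i.e.\ unless $j=\rho(i)$), the $(i,k)$ entry of $\rho_c^{\top}\rho_c$ is $\sum_{j}(\rho_c)_{ji}(\rho_c)_{jk}$, a sum with at most one nonzero term, which equals $1$ precisely when $i=k$ since $\rho$ is injective; the same computation handles $\rho_r$.
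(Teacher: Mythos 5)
Your proposal is correct and follows essentially the same route as the paper, which simply observes that $\rho_c$ and $\rho_r$ are orthogonal (permutation) matrices whose inverses are their transposes. You merely spell out the standard inner-product computation that the paper asserts without detail, which is fine.
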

\begin{proof}
There are many ways to prove this well-known lemma. A quick way to prove it is that: $\rho_c$ and $\rho_r$ both belong to the orthogonal matrix whose inverse is just its transpose.
\end{proof}
\begin{lemma}
	\label{lemma:equv}
	It holds that for any square matrix $\mathbf{A}$, we have $\mathrm{diag}(\rho_c^\top\mathbf{A}\rho_c)=\rho\mathrm{diag}(\mathbf{A})$, and $\mathrm{diag}(\rho_r\mathbf{A}\rho_r^\top)=\rho\mathrm{diag}(\mathbf{A})$.
\end{lemma}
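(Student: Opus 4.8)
The plan is to reduce both identities to the standard fact that conjugating a square matrix by a permutation matrix permutes its diagonal by the associated permutation, and then to match the two sides using the definitions of $\rho_c$ and $\rho_r$ together with Lemma \ref{lemma:i}.

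First I would record the elementary identity $(\mathrm{diag}\,\mathbf{M})_i=\mathbf{e}_i^\top\mathbf{M}\mathbf{e}_i$ valid for any $\mathbf{M}\in\mathbb{R}^{N\times N}$, where $\mathbf{e}_i$ denotes the $i$-th standard basis vector of $\mathbb{R}^N$. Applying it to $\mathbf{M}=\rho_c^\top\mathbf{A}\rho_c$ gives $(\mathrm{diag}(\rho_c^\top\mathbf{A}\rho_c))_i=(\rho_c\mathbf{e}_i)^\top\mathbf{A}(\rho_c\mathbf{e}_i)$, and likewise $(\mathrm{diag}(\rho_r\mathbf{A}\rho_r^\top))_i=(\rho_r^\top\mathbf{e}_i)^\top\mathbf{A}(\rho_r^\top\mathbf{e}_i)$. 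Because $\rho_c$ and $\rho_r$ are permutation matrices, $\rho_c\mathbf{e}_i$ and $\rho_r^\top\mathbf{e}_i$ are themselves standard basis vectors; reading this off from the column-permutation definition \eqref{equ:c_swap} (and using Lemma \ref{lemma:i}, i.e.\ $\rho_c^\top=\rho_c^{-1}$ and $\rho_r^\top=\rho_r^{-1}$, so the direction of the permutation is unambiguous) I would identify both of them as $\mathbf{e}_{\rho^{-1}(i)}$, consistent with the convention $(\rho\mathbf{x})_i=x_{\rho^{-1}(i)}$ fixed earlier.

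Then the computation closes in one line: $(\mathrm{diag}(\rho_c^\top\mathbf{A}\rho_c))_i=\mathbf{e}_{\rho^{-1}(i)}^\top\mathbf{A}\mathbf{e}_{\rho^{-1}(i)}=a_{\rho^{-1}(i)\rho^{-1}(i)}=(\mathrm{diag}\,\mathbf{A})_{\rho^{-1}(i)}=(\rho\,\mathrm{diag}(\mathbf{A}))_i$, where the last step is just the definition of the permutation operator acting on the vector $\mathrm{diag}(\mathbf{A})$; the same chain with $\rho_r^\top$ in place of $\rho_c$ handles the second identity, and since $i\in[N]$ is arbitrary both claims follow. An alternative, essentially bookkeeping-free route is to observe that conjugation by an orthogonal permutation matrix $\mathbf{P}$ sends the $(i,i)$ entry of $\mathbf{A}$ to $a_{\sigma(i)\sigma(i)}$, where $\sigma$ is the permutation realized by $\mathbf{P}$, which is immediate from $\mathbf{P}^\top=\mathbf{P}^{-1}$. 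The only genuine obstacle is the middle step: carefully confirming that $\rho_c$ and $\rho_r^\top$ act on $\mathbf{e}_i$ as the \emph{same} permutation of basis vectors, which is precisely what forces the two stated identities to share the right-hand side $\rho\,\mathrm{diag}(\mathbf{A})$; this is just a matter of being consistent with the row/column conventions set just above the lemma, and everything else is a routine index chase.
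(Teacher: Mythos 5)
Your proof is correct and is essentially the same direct computation as the paper's: both arguments reduce to showing that conjugating $\mathbf{A}$ by the permutation matrix places $a_{\rho^{-1}(i)\rho^{-1}(i)}$ in the $i$-th diagonal slot, which matches the convention $(\rho\mathbf{x})_i=x_{\rho^{-1}(i)}$; you package the index chase via $\mathbf{e}_i^\top\mathbf{M}\mathbf{e}_i$ and $\rho_c\mathbf{e}_i=\mathbf{e}_{\rho^{-1}(i)}$, whereas the paper identifies the row action of $\rho_c^\top$ from $\rho_c^\top\rho_c=\mathbf{I}$ and reads the diagonal off the explicitly written permuted matrix, handling the $\rho_r$ case "in the same way" just as you do. The only point needing care, which you flag correctly, is fixing the direction of the permutation consistently with the paper's conventions (its defining displays are not entirely consistent with the usage in the proof), and your resolution $\rho_c\mathbf{e}_i=\rho_r^\top\mathbf{e}_i=\mathbf{e}_{\rho^{-1}(i)}$ is the right one.
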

\begin{proof}
	We first investigate what operations does $\rho_c^\top$ do. Specifically, from Lemma \ref{lemma:i}, we have:
	\begin{align}
		\rho_c^\top\rho_c=\rho_c^\top\mathbf{I}\rho_c=\rho_c^\top
		\begin{bmatrix} 
			\mathbf{1}^\top_{\rho(1)}\\
			\mathbf{1}^\top_{\rho(2)}\\
			\vdots\\
			\mathbf{1}^\top_{\rho(N)}\\
		\end{bmatrix}
		=\mathbf{I}=
			\begin{bmatrix} 
			\mathbf{1}^\top_{\rho^{-1}(\rho(1))}\\
			\mathbf{1}^\top_{\rho^{-1}(\rho(2))}\\
			\vdots\\
			\mathbf{1}^\top_{\rho^{-1}(\rho(N))}\\
		\end{bmatrix},
	\end{align}
where $\mathbf{1}_{\rho(i)}\in\mathbb{R}^{N\times 1}$ refers to the vector with one element $1$ at position $\rho(i)$ and zero elements at other  positions. 
Hence, we can know that $\rho_c^\top$ swaps the row vectors of the matrix according to $\rho$.
Then, we have 
	\begin{align}
		\mathrm{diag}(\rho_c^\top\mathbf{A}\rho_c)&=	\mathrm{diag}\bigg(\rho_c^\top\begin{bmatrix} a_{1\rho^{-1}(1)} & a_{1\rho^{-1}(2)}  &\cdots & a_{1\rho^{-1}(N)} 
			 \\ a_{2\rho^{-1}(1)} & a_{2\rho^{-1}(2)} & \cdots& a_{2\rho^{-1}(N)}
			 \\
			 \vdots&\vdots&\ddots&\vdots
			 \\
			 a_{N\rho^{-1}(1)} & a_{N\rho^{-1}(2)} & \cdots& a_{N\rho^{-1}(N)}
		  \end{bmatrix}\bigg)
	  \notag\\
	  &=(a_{\rho^{-1}(1)\rho^{-1}(1)} , a_{\rho^{-1}(2)\rho^{-1}(2)} ,\cdots, a_{\rho^{-1}(N)\rho^{-1}(N)} )\notag\\
	  &=\rho\mathrm{diag}(\mathbf{A}).
	\end{align}
Now, we prove $\mathrm{diag}(\rho_c^\top\mathbf{A}\rho_c)=\rho\mathrm{diag}(\mathbf{A})$, and 
$\mathrm{diag}(\rho_r\mathbf{A}\rho_r^\top)=\rho\mathrm{diag}(\mathbf{A})$ can be proved in the same way.
\end{proof}
\begin{figure*}[t]
	\begin{center}
		{\includegraphics[width=150mm]{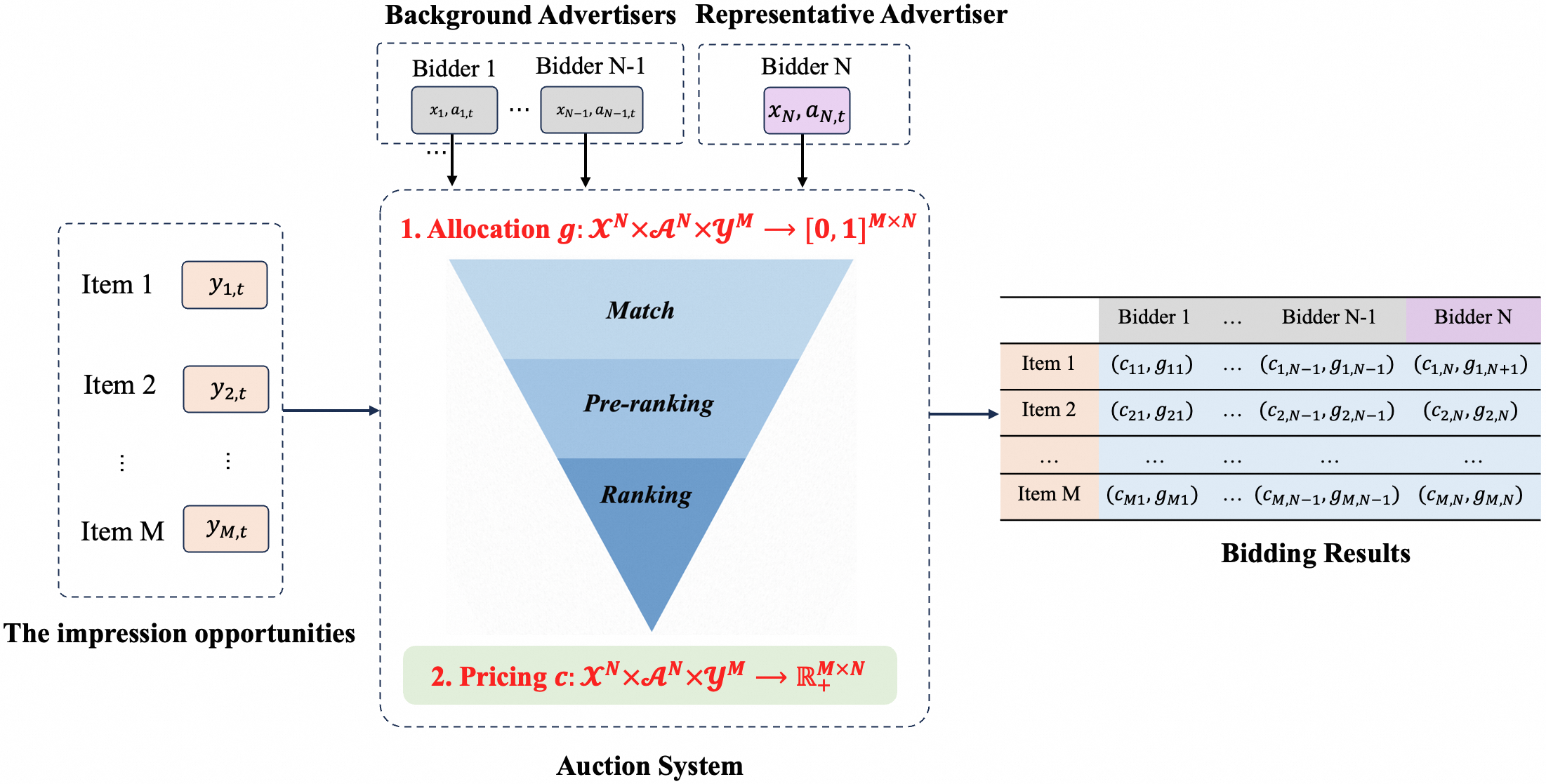}}
		\caption{An illustration of the bidding process in an industrial advertising system between time step $t$ and $t+1$. Specifically, there are $N-1$ background advertisers and one representative advertiser bidding for $M$ impression opportunities, where $x_i\in\mathcal{X}$ denotes the contextual feature of advertiser $i$, $a_{i,t}$ is the bid of advertiser $i$, and $y_i\in\mathcal{Y}$ denotes the contextual feature of the $j$-th impression opportunity, $j\in [M]$. The auction system can be viewed as two mechanisms, including the allocation mechanism $g:\mathcal{X}^N\times\mathcal{A}^N\times\mathcal{Y}^M\rightarrow [0,1]^{M\times N}$, and the pricing mechanism $c:\mathcal{X}^N\times\mathcal{A}^N\times\mathcal{Y}^M\rightarrow \mathbb{R}_+^{M\times N}$. The allocation mechanism $g$ determines the distribution of the impression opportunities among the advertisers, while the pricing mechanism $c$ decides how much each advertiser should pay.}
		\label{fig:ads_sys}
	\end{center}
\end{figure*}
\subsection{Other Lemmas}
 \begin{lemma}[Lemma B.2 in \cite{duan2023equivariant}]
 \label{lemma:generalization}
        If $|l(\cdot)|\le c$ for constant $c>0$ and $\forall f,f'\in\mathcal{F}, |l(f,u)-l(f',u)|\le K\|f-f'\|_\infty$, then we have 
        \begin{align}
              \Delta\le 2\inf_{r>0}\bigg\{\sqrt{\frac{2\ln\mathcal{N}_\infty(\mathcal{M}',r)}{|\mathcal{S}|}}+Kr\bigg\}+C_{\delta, \mathcal{S}},
        \end{align}
        where $\Delta\triangleq\mathbf{E}_{u\sim\mathcal{D}}[l(f,u)]-\frac{1}{m}\sum_{u\in\mathcal{S}}l(f,u)$, and $C_{\delta, \mathcal{S}}\triangleq4\sqrt{\frac{2\ln (4/\delta)}{|\mathcal{S}|}}$.
        Here $\mathcal{D}$ and $\mathcal{S}$ represent the real distribution of $u$ and the training set of $u$, and $m\triangleq |\mathcal{S}|$.
    \end{lemma}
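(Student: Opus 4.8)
The plan is to prove the bound as a two-sided uniform-convergence statement, holding with probability at least $1-\delta$ simultaneously over all $f\in\mathcal{M}'$; this is why the covering number $\mathcal{N}_\infty(\mathcal{M}',r)$, a complexity measure of the whole class, governs the rate rather than any single-function quantity. Writing $L(f)\triangleq\mathbb{E}_{u\sim\mathcal{D}}[l(f,u)]$, $\hat{L}(f)\triangleq\frac{1}{m}\sum_{u\in\mathcal{S}}l(f,u)$, and $\Phi(\mathcal{S})\triangleq\sup_{f\in\mathcal{M}'}\big(L(f)-\hat{L}(f)\big)=\sup_{f}\Delta(f)$, I would control $\Phi(\mathcal{S})$ in three stages: a concentration step that replaces $\Phi$ by its expectation, a symmetrization step that produces the leading factor $2$ and converts the expected supremum into a Rademacher complexity, and a discretization step that uses the $l_\infty$ cover together with the Lipschitz hypothesis to evaluate that complexity, leaving the trade-off that the final $\inf_{r>0}$ optimizes.

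First I would apply a bounded-differences (McDiarmid) inequality to $\Phi(\mathcal{S})$. Because $|l|\le c$, swapping a single sample $u\in\mathcal{S}$ changes $\hat{L}(f)$, and hence $\Phi$, by at most $2c/m$; McDiarmid then gives, with probability at least $1-\delta$, that $\Phi(\mathcal{S})\le\mathbb{E}_{\mathcal{S}}[\Phi(\mathcal{S})]+C_{\delta,\mathcal{S}}$, where the deviation term $C_{\delta,\mathcal{S}}=4\sqrt{2\ln(4/\delta)/|\mathcal{S}|}$ is exactly the confidence correction in the statement. This isolates the purely statistical fluctuation from the combinatorial complexity of the class, which is handled next.

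Second, I would bound $\mathbb{E}_{\mathcal{S}}[\Phi(\mathcal{S})]$ by symmetrization against an independent ghost sample, which yields $\mathbb{E}_{\mathcal{S}}[\Phi(\mathcal{S})]\le 2\,\mathcal{R}_m(\mathcal{M}')$ for the loss-composed Rademacher complexity $\mathcal{R}_m$; this is the origin of the factor $2$ multiplying the braced expression. To evaluate $\mathcal{R}_m(\mathcal{M}')$, I would fix $r>0$ and take a minimal $l_\infty$ $r$-cover $\mathcal{M}'_r$ of $\mathcal{M}'$ with $|\mathcal{M}'_r|=\mathcal{N}_\infty(\mathcal{M}',r)$. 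For any $f\in\mathcal{M}'$ its nearest cover element $f'$ satisfies $\|f-f'\|_\infty\le r$, so the Lipschitz assumption $|l(f,u)-l(f',u)|\le K\|f-f'\|_\infty\le Kr$ holds \emph{uniformly in $u$} and contributes an additive discretization error of order $Kr$, while the residual finite class is handled by Massart's finite-class lemma, giving $\sqrt{2\ln\mathcal{N}_\infty(\mathcal{M}',r)/|\mathcal{S}|}$. Combining,
\begin{align}
    \mathcal{R}_m(\mathcal{M}')\le\inf_{r>0}\bigg\{\sqrt{\frac{2\ln\mathcal{N}_\infty(\mathcal{M}',r)}{|\mathcal{S}|}}+Kr\bigg\}.
\end{align}
Chaining the three stages then yields $\Delta\le\Phi(\mathcal{S})\le 2\inf_{r>0}\{\cdots\}+C_{\delta,\mathcal{S}}$, which is the claim. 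The use of the $l_\infty$ (rather than an $L_2$) covering is essential precisely here: the Lipschitz slack must dominate the worst sample simultaneously inside the empirical average, so the approximation error stays $Kr$ with no residual dependence on the sample.

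The main obstacle I anticipate is the constant bookkeeping --- verifying that McDiarmid at confidence $\delta$ produces exactly the coefficient $4$ and the $4/\delta$ inside $C_{\delta,\mathcal{S}}$ (which typically requires a union over the one-or-two concentration events created by symmetrization), that symmetrization supplies precisely the factor $2$, and that the boundedness constant $c$ is absorbed or normalized away in the main term as written. A secondary subtlety is the coupling: the \emph{same} scale $r$ must index both the $Kr$ discretization error and the covering number inside the Massart term, so that the closing $\inf_{r>0}$ is a genuine single-parameter optimization rather than two independent minimizations. Since this is Lemma B.2 of \cite{duan2023equivariant}, I would ultimately defer the exact constant-level computations to that reference and invoke the lemma as stated.
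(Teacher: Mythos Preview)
Your proposal is correct and in fact goes further than the paper: the paper's own ``proof'' of this lemma is simply the one-line deferral ``See the proof of Lemma B.2 in \cite{duan2023equivariant},'' whereas you additionally sketch the standard McDiarmid\,+\,symmetrization\,+\,Massart-on-a-cover argument that underlies the cited result. Since both you and the paper ultimately invoke the lemma from \cite{duan2023equivariant}, the approaches agree; your concern about constant bookkeeping is legitimate but is precisely what the deferral to the reference is meant to handle.
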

    \begin{proof}
        See the proof of Lemma B.2 in \cite{duan2023equivariant}.
    \end{proof}

\begin{lemma}[Lemma B.6 in \cite{duan2023equivariant}]
\label{lemma:orbit_averaging}
    For function class $\mathcal{M}$ and orbit averaging operator $\mathcal{Q}$, if $\forall M_1,M_2\in\mathcal{M}$, we have $l_\infty(\mathcal{Q}M_1,\mathcal{Q}M_2)\le l_\infty(M_1,M_2)$, then 
    $\mathcal{N}_\infty(\mathcal{Q}\mathcal{M},r)\le\mathcal{N}_\infty(\mathcal{M},r)$ for any $r>0$.
\end{lemma}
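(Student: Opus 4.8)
The plan is to prove this by pushing an optimal cover of $\mathcal{M}$ through the orbit-averaging operator $\mathcal{Q}$ and showing that its image remains a valid $r$-cover, now of $\mathcal{Q}\mathcal{M}$. Concretely, fix $r>0$ and let $\mathcal{M}'$ be a smallest $r$-cover of $\mathcal{M}$ under $l_\infty$, so that $|\mathcal{M}'|=\mathcal{N}_\infty(\mathcal{M},r)$ and every $M\in\mathcal{M}$ admits some $M'\in\mathcal{M}'$ with $l_\infty(M,M')\le r$ (this is exactly Definition \ref{def:r_cover}). I would then consider the image set $\mathcal{Q}\mathcal{M}'\triangleq\{\mathcal{Q}M':M'\in\mathcal{M}'\}$, whose cardinality is at most $|\mathcal{M}'|$ since it is the image of $\mathcal{M}'$ under the fixed map $M'\mapsto\mathcal{Q}M'$, and applying a map can only merge elements, never create new ones.

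The heart of the argument is to verify that $\mathcal{Q}\mathcal{M}'$ actually $r$-covers $\mathcal{Q}\mathcal{M}$. Take any element of $\mathcal{Q}\mathcal{M}$, which by definition has the form $\mathcal{Q}M$ for some $M\in\mathcal{M}$, and pick $M'\in\mathcal{M}'$ with $l_\infty(M,M')\le r$. The non-expansiveness hypothesis then yields $l_\infty(\mathcal{Q}M,\mathcal{Q}M')\le l_\infty(M,M')\le r$, so the element $\mathcal{Q}M'\in\mathcal{Q}\mathcal{M}'$ lies within $r$ of $\mathcal{Q}M$. Since this holds for every $\mathcal{Q}M\in\mathcal{Q}\mathcal{M}$, the set $\mathcal{Q}\mathcal{M}'$ is a legitimate $r$-cover of $\mathcal{Q}\mathcal{M}$, and therefore $\mathcal{N}_\infty(\mathcal{Q}\mathcal{M},r)\le|\mathcal{Q}\mathcal{M}'|\le|\mathcal{M}'|=\mathcal{N}_\infty(\mathcal{M},r)$, which is exactly the claim.

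The one point that requires care — and which I expect to be the main obstacle — is that the non-expansive inequality is applied to the pair $(M,M')$ with $M\in\mathcal{M}$ but $M'$ only guaranteed to lie in the cover $\mathcal{M}'$, which Definition \ref{def:r_cover} does not force to be a subset of $\mathcal{M}$; strictly read, the hypothesis is only asserted on $\mathcal{M}\times\mathcal{M}$. To close this gap cleanly I would observe that the orbit averaging $\mathcal{Q}$ of Definition \ref{def:orbit_averaging} is in fact globally non-expansive under $l_\infty$, not merely on $\mathcal{M}\times\mathcal{M}$: since each $\rho^{-1}$ only permutes coordinates it preserves the $1$-norm, and $\mathcal{Q}$ is a uniform convex average over $\Omega_N$, so by the triangle inequality $\|\mathcal{Q}M_1(\mathbf{x})-\mathcal{Q}M_2(\mathbf{x})\|_1\le\frac{1}{N!}\sum_{\rho\in\Omega_N}\|M_1(\rho\mathbf{x})-M_2(\rho\mathbf{x})\|_1\le l_\infty(M_1,M_2)$ for arbitrary functions $M_1,M_2$ and every $\mathbf{x}$; taking the maximum over $\mathbf{x}$ gives $l_\infty(\mathcal{Q}M_1,\mathcal{Q}M_2)\le l_\infty(M_1,M_2)$. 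This verifies the hypothesis in full generality and legitimizes its use on $(M,M')$ regardless of whether $M'\in\mathcal{M}$. Alternatively, one may simply restrict attention to optimal covers chosen inside $\mathcal{M}$, for which the stated hypothesis applies verbatim and the remainder of the argument is unchanged.
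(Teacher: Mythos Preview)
Your proposal is correct and follows essentially the same approach as the paper: take a minimal $r$-cover $\mathcal{M}'$ of $\mathcal{M}$, push it through $\mathcal{Q}$, and use the non-expansiveness hypothesis to conclude that $\mathcal{Q}\mathcal{M}'$ is an $r$-cover of $\mathcal{Q}\mathcal{M}$ with no more elements. Your discussion of the $\mathcal{M}'\subset\mathcal{M}$ subtlety is an extra layer of care that the paper's proof of this lemma does not spell out; the paper instead establishes the global non-expansiveness of $\mathcal{Q}$ separately (in Proposition~\ref{prop:small_cover_number}) by exactly the triangle-inequality computation you sketch.
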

\begin{proof}
    For any $r>0$, denote $\mathcal{M}_r$ as the smallest $r$-covering set that covers $\mathcal{M}$ with size $\mathcal{N}_\infty(\mathcal{M},r)$. For any $M\in\mathcal{M}$, let $M_r\in\mathcal{M}_r$ be the function that $r$-covers $M$. We have $l_\infty(\mathcal{Q}M_r,\mathcal{Q}M)\le l_\infty(M_r,M)\le r$. Therefore, $\mathcal{Q}\mathcal{M}_r$ is a $r$-covering set of $\mathcal{Q}\mathcal{M}$, and we have $  \mathcal{N}_\infty(\mathcal{Q}\mathcal{M},r)\le |\mathcal{Q}\mathcal{M}_r|\le |\mathcal{M}_r|=\mathcal{N}_\infty(\mathcal{M},r)$.
\end{proof}

\begin{lemma}[Lemma 4.3 in \cite{luo2018algorithmic}]
\label{lemma:telescope}
    Let the reward function of two environment models $M$ and $\hat{M}$ be the same, denoted as $r$, and let their transition rules be $P$ and $\hat{P}$, respectively. Then, the performance difference of a policy $\pi$ in two environment models satisfies:
    \begin{align}
        \eta_{\hat{M}}(\pi)-\eta_M(\pi)=\gamma\mathbb{E}_{(s,a)\sim d^\pi_{\hat{P}}}[G^\pi_{\hat{M}}(s,a)],
    \end{align}
    where $d^\pi_{\hat{P}}$ represents the state-action pair distributions under policy $\pi$ in $\hat{M}$, and $G^\pi_{\hat{M}}(s,a)=\sum_{s'}(\hat{P}(s,a)-P(s,a))V^\pi_M(s')$.
\end{lemma}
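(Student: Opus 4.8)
The plan is to prove this by the standard telescoping (``simulation lemma'') argument, exploiting the fact that $M$ and $\hat{M}$ share the same reward function $r$, so that every reward term cancels and only a term measuring the discrepancy between $P$ and $\hat{P}$ (weighted by the value function $V^\pi_M$) survives. Throughout I work with the finite horizon of the POMDP, so all ``sums over $t$'' are finite sums and no infinite-series convergence argument is needed; I write $V^\pi_M$ for the value function of $\pi$ evaluated under the \emph{true} dynamics $P$, with the usual convention that the terminal value is zero.

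First I would express both performances in a form amenable to telescoping: by definition $\eta_{\hat{M}}(\pi)=\mathbb{E}_{\hat{P},\pi}[\sum_t \gamma^t r(s_t,a_t)]$, where the trajectory $(s_0,a_0,s_1,\dots)$ is rolled out under $\hat{P}$ and $\pi$, while $\eta_M(\pi)=\mathbb{E}_{s_0}[V^\pi_M(s_0)]$. The key device is to add and subtract the zero-expectation telescoping term $\sum_t \gamma^t(\gamma V^\pi_M(s_{t+1})-V^\pi_M(s_t))$, whose value collapses to $-V^\pi_M(s_0)$ since consecutive summands cancel and the boundary term $\gamma^{T}V^\pi_M(s_T)$ vanishes (terminal value zero). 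This yields the identity
\begin{align}
\eta_{\hat{M}}(\pi)-\eta_M(\pi)=\mathbb{E}_{\hat{P},\pi}\Big[\sum_t \gamma^t\big(r(s_t,a_t)+\gamma V^\pi_M(s_{t+1})-V^\pi_M(s_t)\big)\Big].
\end{align}

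Next I would simplify the per-step summand using the Bellman equation for $V^\pi_M$ under $P$, namely $V^\pi_M(s_t)=\mathbb{E}_{a_t\sim\pi}[r(s_t,a_t)+\gamma\sum_{s'}P(s'\mid s_t,a_t)V^\pi_M(s')]$. Taking the conditional expectation over $a_t\sim\pi(s_t)$ and $s_{t+1}\sim\hat{P}(s_t,a_t)$, the reward terms cancel exactly (this is precisely where sharing $r$ is essential), and the summand reduces to $\gamma\,\mathbb{E}_{a_t\sim\pi}[\sum_{s'}(\hat{P}(s'\mid s_t,a_t)-P(s'\mid s_t,a_t))V^\pi_M(s')]=\gamma\,\mathbb{E}_{a_t\sim\pi}[G^\pi_{\hat{M}}(s_t,a_t)]$. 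Substituting back gives $\mathbb{E}_{\hat{P},\pi}[\sum_t \gamma^{t+1}G^\pi_{\hat{M}}(s_t,a_t)]$; recognizing that the $\gamma^t$-weighted state-action visitation under $(\hat{P},\pi)$ is exactly the discounted occupancy measure $d^\pi_{\hat{P}}$ lets me factor out a single $\gamma$ and rewrite the whole quantity as $\gamma\,\mathbb{E}_{(s,a)\sim d^\pi_{\hat{P}}}[G^\pi_{\hat{M}}(s,a)]$, which is the claim.

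The main obstacle I anticipate is bookkeeping rather than conceptual: tracking the discount-exponent shift so that exactly one factor of $\gamma$ survives out front, and matching the precise normalization convention adopted for $d^\pi_{\hat{P}}$ (whether it carries the $\sum_t\gamma^t$ weighting explicitly or an additional $(1-\gamma)^{-1}$ factor). Since this paper takes $\gamma=1$ over a finite horizon $T$, I would remark that all sums are finite, the telescoping boundary term is well defined and zero at termination, and $d^\pi_{\hat{P}}$ is simply the finite-horizon state-action visitation distribution under $(\hat{P},\pi)$, so the argument goes through without any limiting argument.
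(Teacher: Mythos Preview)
Your proposal is correct and follows the standard telescoping (simulation lemma) argument. The paper itself does not give a self-contained proof of this lemma; it simply defers to the cited reference \cite{luo2018algorithmic}, whose proof is exactly the telescoping-plus-Bellman-cancellation argument you outline.
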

\begin{proof}
    See proof of the Lemma 4.3 in \cite{luo2018algorithmic}.
\end{proof}

\section{Proof of Theorems}
\label{app:proof_thm}
\begin{theorem}[Better Generalization Ability of the PE Environment Model]
	For any environment model $\hat{M}'$, making it satisfy  \eqref{equ:design_of_model} can reduce the upper bound of the gap $\Delta_G(\hat{M}')$, i.e.,
	\begin{align}
		\text{Upper Bound of }  \Delta_G(\hat{M})\le\text{Upper Bound of }  \Delta_G(\hat{M}'),
	\end{align}
	where $\hat{M}$ denotes the environment model $\hat{M}'$ after making it satisfy \eqref{equ:design_of_model}.
\end{theorem}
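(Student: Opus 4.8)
The plan is to recast \eqref{equ:upper_bound_pe} as a comparison of $r$-covering numbers and then feed that comparison into the generalization bound of Lemma~\ref{lemma:generalization}. Write $\mathcal{M}'$ for the hypothesis class of the unconstrained environment model and let $\mathcal{Q}$ be the orbit-averaging operator, adapted so that on the output side it permutes the $N$ predicted local states while leaving the predicted reward fixed; this is the operator whose fixed points are exactly the maps obeying \eqref{equ:design_of_model}, i.e.\ those that are PE in $\hat{\mathbf{s}}_{t+1}$ and PI in $\hat{r}_t$. Then ``making $\hat{M}'$ satisfy \eqref{equ:design_of_model}'' amounts to replacing $\hat{M}'$ by $\mathcal{Q}\hat{M}'$ and the class by $\mathcal{M}\triangleq\mathcal{Q}\mathcal{M}'$. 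By the orbit-averaging lemma (the one showing $\rho(\mathcal{Q}f(\mathbf{x}))=\mathcal{Q}f(\rho\mathbf{x})$ and $\mathcal{Q}f=f$ whenever $f$ is already equivariant), every element of $\mathcal{M}$ indeed satisfies \eqref{equ:design_of_model}, so this identification is consistent.

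The first technical step is to verify the non-expansiveness hypothesis of Lemma~\ref{lemma:orbit_averaging}, namely $l_\infty(\mathcal{Q}M_1,\mathcal{Q}M_2)\le l_\infty(M_1,M_2)$ for all $M_1,M_2\in\mathcal{M}'$. This is a short estimate: for any input $\mathbf{x}$, the triangle inequality gives $\|\mathcal{Q}M_1(\mathbf{x})-\mathcal{Q}M_2(\mathbf{x})\|_1\le \frac{1}{N!}\sum_{\rho}\|(\rho^{-1}\oplus\mathrm{id})(M_1(\rho\mathbf{x})-M_2(\rho\mathbf{x}))\|_1$, and since a permutation matrix and the identity both preserve the $1$-norm, each summand equals $\|M_1(\rho\mathbf{x})-M_2(\rho\mathbf{x})\|_1\le l_\infty(M_1,M_2)$; taking the maximum over $\mathbf{x}$ gives the claim. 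Lemma~\ref{lemma:orbit_averaging} then yields $\mathcal{N}_\infty(\mathcal{M},r)=\mathcal{N}_\infty(\mathcal{Q}\mathcal{M}',r)\le\mathcal{N}_\infty(\mathcal{M}',r)$ for every $r>0$.

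The second step is to check that Lemma~\ref{lemma:generalization} applies to $\Delta_G$ with the MAE loss $L$: boundedness $|L|\le c$ holds on the (bounded) domain of states, actions, and rewards, and the Lipschitz condition $|L(f,u)-L(f',u)|\le\|f-f'\|_\infty$ holds with $K=1$ by the reverse triangle inequality for the $1$-norm. Hence Lemma~\ref{lemma:generalization} bounds both $\Delta_G(\hat{M})$ and $\Delta_G(\hat{M}')$ by $2\inf_{r>0}\{\sqrt{2\ln\mathcal{N}_\infty(\cdot,r)/|\mathcal{S}|}+r\}+C_{\delta,\mathcal{S}}$, instantiated with the covering number of $\mathcal{M}$, respectively $\mathcal{M}'$. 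For each fixed $r$ this expression is nondecreasing in the covering number, and we showed $\mathcal{N}_\infty(\mathcal{M},r)\le\mathcal{N}_\infty(\mathcal{M}',r)$, so the upper bound for $\Delta_G(\hat{M})$ does not exceed that for $\Delta_G(\hat{M}')$, which is \eqref{equ:upper_bound_pe}. Proposition~\ref{prop:homo} is not needed for the bound itself, but it justifies that the restriction to $\mathcal{M}$ does not sacrifice approximation power, since the true transition and reward already lie in the PE/PI class.

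The hard part will be the bookkeeping around the mixed equivariant/invariant output: one must define the orbit-averaging operator so that it acts as a genuine permutation on the $N$ state blocks and as the identity on the reward coordinate, confirm that its fixed-point set is precisely the set of maps satisfying \eqref{equ:design_of_model}, and check that the $1$-norm-preservation argument still goes through for this block-wise action rather than for a plain $N$-dimensional vector. A secondary subtlety is the identification of the constrained class with $\mathcal{Q}\mathcal{M}'$: this is the natural reading of ``$\hat{M}$ denotes the environment model $\hat{M}'$ after making it satisfy \eqref{equ:design_of_model}'', but for a concrete natively equivariant architecture one would additionally argue that its realizable class is covered by $\mathcal{Q}\mathcal{M}'$, after which monotonicity of the covering number under inclusion delivers the same conclusion.
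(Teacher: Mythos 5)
Your proposal is correct and follows essentially the same route as the paper: apply orbit averaging to obtain the constrained class $\mathcal{Q}\mathcal{M}'$, show $\mathcal{Q}$ is non-expansive under $l_\infty$ so that the $r$-covering number shrinks, verify the MAE loss is $1$-Lipschitz in $\|\cdot\|_\infty$ so the covering-number generalization bound applies, and conclude by monotonicity of the bound. The only cosmetic difference is the handling of the mixed PE/PI output: the paper sidesteps your "block-wise action" bookkeeping by bundling the reward into each per-advertiser output block ($\hat{u}_{i,t}=[\hat{s}_{i,t},\hat{r}_t]^\top$), turning \eqref{equ:design_of_model} into a plain equivariance condition.
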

\begin{proof}
	For the sake of explanation, we express the outputs of the PE environment model $\hat{M}$ as $\hat{\mathbf{u}}_{t}\triangleq\hat{u}_{0:N,t}$ in this subsection, where $\hat{u}_{i,t}=[\hat{s}_{i,t}, \hat{r}_t]^\top$, and the design in \eqref{equ:design_of_model} can be equivalently written as:
	\begin{align}
		\label{equ:equivariant_design_sim}
		\rho\hat{\mathbf{u}}_t=\hat{M}(\rho\mathbf{s}_t,\rho\mathbf{a}_t).
	\end{align}
	Consider a random environment model $\hat{M}'$ without the design of \eqref{equ:equivariant_design_sim}.
	A common way to make $\hat{M}'$ satisfy \eqref{equ:equivariant_design_sim} is to apply the {orbit averaging} $\mathcal{Q}$ (as defined in Definition \ref{def:orbit_averaging}) to it, i.e.,
	\begin{align}
		\mathcal{Q}\hat{M}'=\frac{1}{N!}\sum_{\rho\in\Omega_{N}}\rho^{-1}\circ \hat{M}' \circ \rho.
	\end{align}
	This is because the orbit averaging $\mathcal{Q}$ can make any function a PE function. Formally, we have:
	\begin{lemma}
		\label{lemma:orbit_averaging}
		The orbit averaging $\mathcal{Q}$ can make any function $f:\mathcal{X}\rightarrow\mathcal{Y}$ a PE function, i.e., 
		$\rho(\mathcal{Q}f(\mathbf{x}))=\mathcal{Q}f(\rho\mathbf{x})$, for any $\mathbf{x}\in\mathcal{X}$. Moreover, if $f$ is a PE function, then $\mathcal{Q}f=f$.
	\end{lemma}
	The proof is given in Appendix \ref{app:proof_lemma_oa}.
	We consider the gap between the mean absolute error (MAE) of $\hat{M}'$ on the training dataset, denoted as $\mathcal{S}$, and that on the testing dataset, denoted as $\mathcal{D}$ \footnote{Theoretically, the testing dataset should be the real distribution of the state-action pairs $(\mathbf{s}_t, \mathbf{a}_t)$.}, as the generalization performance metric of the environment model, i.e., 
	\begin{align}
		\Delta_G(\hat{M}')\triangleq\mathbb{E}_{\mathcal{D}}[L(\hat{M}', \mathbf{s_t}, \mathbf{a}_t)]-\frac{1}{|\mathcal{S}|}\sum_{\mathcal{S}}L(\hat{M}', \mathbf{s_t}, \mathbf{a}_t),
	\end{align}
	where $L(\hat{M}', \mathbf{s_t}, \mathbf{a}_t)\triangleq\left\|\mathbf{u}_t-\hat{\mathbf{u}}_t\right\|_1$ is the MAE under $\mathbf{s}_t$ and $\mathbf{a}_t$, and $\mathbf{u}_t$ denotes the ground-truth.
	Next, we provide the upper bound of $\Delta_G(\hat{M}')$ in the following proposition.
	\begin{proposition}[Generalization Bound]
		\label{prop:generalization_bound}
		Let $\mathcal{M}'$ be the function class of $\hat{M}'$, with probability at least $\delta>0$, we have:
		\begin{align}
			\label{equ:generalization_bound}
			\Delta_G(\hat{M}')\le 2\inf_{r>0}\bigg\{\sqrt{\frac{2\ln\mathcal{N}_\infty(\mathcal{M}',r)}{|\mathcal{S}|}}+r\bigg\}+C_{\delta, \mathcal{S}},
		\end{align}
		where $C_{\delta, \mathcal{S}}\triangleq4\sqrt{\frac{2\ln (4/\delta)}{|\mathcal{S}|}}$, and $\mathcal{N}_\infty(\mathcal{M}',r)$ is the $r$-covering number (defined in Definition \ref{def:r_cover}) of $\mathcal{M}'$.
	\end{proposition}
	The proof is given in Appendix \ref{app:proof_prop_generalization_bound}.
	From \eqref{equ:generalization_bound} we know that the generalization bound $\Delta_G(\hat{M}')$ is positively correlated with the $r$-covering number of $\mathcal{M}'$, i.e., $\mathcal{N}_\infty(\mathcal{M}',r)$.
	After applying the orbit averaging to $\hat{M}'$, we 
	get a new environment model $\hat{M}''=\mathcal{Q}\hat{M}'$ belonging to a new function class $\mathcal{M}''\triangleq\mathcal{Q}\mathcal{M}'$, where $\mathcal{Q}\mathcal{M}'$ represents the function class obtained by applying the orbit averaging $\mathcal{Q}$ to every element in $\mathcal{M}'$.
	Notably, 
	it can be proved that $\mathcal{M}''$ has a smaller $r$-covering number than $\mathcal{M}'$, which thereby decreases the generalization bound $\Delta_G$.
	\begin{proposition}[Smaller $r$-covering Number]
		\label{prop:small_cover_number}
		Let $\mathcal{M}''\triangleq\mathcal{Q}\mathcal{M}'$.
		Then we have $\mathcal{N}_\infty(\mathcal{M}'',r)\le \mathcal{N}_\infty(\mathcal{M}',r)$.
	\end{proposition}
	Essentially, the orbit averaging $\mathcal{Q}$ is a non-expanding operator that can shrink the original function class $\mathcal{M}'$, making the new function class $\mathcal{M}''$ a smaller $r$-covering number.
	A formal proof is given in Appendix
	\ref{app:proof_prop_cover_number}.
	Now, we can claim that for a random environment model $\hat{M}'$, making it satisfy \eqref{equ:equivariant_design_sim} can reduce the upper bound of the gap between the MAE on the training dataset and the testing dataset, i.e.,
	\begin{align}
		\text{Upper Bound of }  \Delta_G(\hat{M}'')\le\text{Upper Bound of }  \Delta_G(\hat{M}')\,
	\end{align}
	can increase its generalization ability.
\end{proof}

\section{Proof of Propositions}
\subsection{Proof of Proposition \ref{prop:homo}}
\label{app:proof_prop_homo}

\begin{proposition}[The PE and PI Properties of the POMDP, Proposition \ref{prop:homo} in the main paper]
In the POMDP of the representation advertiser, the transition rule $P$ is a PE function, i.e., $	
\rho\mathbf{s}_{t+1}\sim P(\rho\mathbf{s}_t,\rho\mathbf{a}_t)$, and 
the reward $R$ is a PI function, i.e., $R(\rho\mathbf{s}_t,\rho\mathbf{a}_{t})=R(\mathbf{s}_t,\mathbf{a}_{t})$, $\forall t, \rho$.
\end{proposition}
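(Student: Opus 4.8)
The plan is to decompose the one-step dynamics into (i) the industrial auction mechanism, which maps the advertisers' contextual features and bids together with the impression features to an allocation matrix and a pricing matrix, and (ii) the per-advertiser bookkeeping that turns an advertiser's own auction outcome (which impressions it won, and at what cost) into its next local state. This is precisely the decomposition drawn in Figure~\ref{fig:ads_sys}, where the auction is written as $g:\mathcal{X}^N\times\mathcal{A}^N\times\mathcal{Y}^M\to[0,1]^{M\times N}$ and $c:\mathcal{X}^N\times\mathcal{A}^N\times\mathcal{Y}^M\to\mathbb{R}_+^{M\times N}$. With the dynamics in this form, the proposition reduces to two facts: the auction is \emph{anonymous} (it never uses a bidder's index, only its bid/feature values), and the next-state update of advertiser $i$ reads only column $i$ of $g$ and $c$ together with $s_{i,t}$ and $a_{i,t}$.

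First I would make anonymity precise: if $\rho$ permutes the advertisers, then permuting the columns of $(\mathbf{x}_{1:N},\mathbf{a}_{1:N})$ by $\rho$ permutes the columns of both $g$ and $c$ by the same $\rho$, i.e.\ $g(\rho\mathbf{x},\rho\mathbf{a},\mathbf{y})=g(\mathbf{x},\mathbf{a},\mathbf{y})\,\rho_c$ and likewise for $c$, while $\mathbf{y}_{1:M}$ and the impression (row) index set are untouched. For the multi-stage GSP-style mechanism used in practice this is verified stage by stage: the coarse- and fine-ranking stages score each candidate independently and then sort, so reindexing candidates only reindexes the surviving set; the final allocation/pricing step (rank by effective bid, charge a next-price-type quantity) is a symmetric function of the competing bids, with symmetric (e.g.\ randomized, exchangeable) tie-breaking. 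Composition of column-equivariant maps is column-equivariant, so the entire auction is column-equivariant in the bidders; the permutation-matrix lemmas in Appendix~\ref{app:notations} (e.g.\ Lemma~\ref{lemma:equv}) are the bookkeeping tools.

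Second I would push this through both outputs. For the transition rule, advertiser $i$'s next local state is produced by a fixed, index-independent update $s_{i,t+1}=U\big(s_{i,t},a_{i,t},g_{\cdot,i},c_{\cdot,i};\,\xi_{i,t}\big)$, where $\xi_{i,t}$ is exogenous per-advertiser noise; since $(s_{i,t},a_{i,t},g_{\cdot,i},c_{\cdot,i})$ all move to slot $\rho(i)$ and $U$ does not see the slot, the joint next state satisfies $\rho\mathbf{s}_{t+1}\sim P(\rho\mathbf{s}_t,\rho\mathbf{a}_t)$ --- the ``$\sim$'' being an equality in distribution obtained by also permuting the i.i.d.\ noise, which is harmless. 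For the reward, the representative's payoff is an inner product of the impression-value vector with the representative's allocation column, $R(\mathbf{s}_t,\mathbf{a}_t)=\sum_{j\in[M]}\mathrm{value}(y_j)\,g_{j,N}$; because that column is determined solely by the representative's own feature and bid and the \emph{unordered} collection of the competitors' features and bids, reordering the advertisers cannot change it (the column merely travels with the representative), so $r_t$ is invariant, giving $R(\rho\mathbf{s}_t,\rho\mathbf{a}_t)=R(\mathbf{s}_t,\mathbf{a}_t)$.

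I expect the main obstacle to be the first step --- formalizing ``the industrial auction is anonymous'' without getting lost in the specifics of the multi-stage pipeline. The clean route is to model each stage as a map between sets/matrices, check permutation-equivariance one stage at a time, and then invoke closure of permutation-equivariant maps under composition; the delicate points are (a) choosing tie-breaking rules that are themselves symmetric so a single stage is genuinely equivariant, and (b) bookkeeping which tensor axis carries the bidder index versus the impression index, so that only the bidder permutation $\rho$ acts and the impression axis is left alone. Once anonymity is in hand, the transition and reward claims are routine, modulo phrasing the stochastic transition as an equality in distribution over the exogenous noise.
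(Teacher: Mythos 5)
Your proposal follows essentially the same route as the paper's proof: both reduce the claim to the anonymity (column-equivariance) of the allocation and pricing mechanisms, i.e.\ $\mathbf{G}(\rho\mathbf{x},\rho\mathbf{a}_t,\mathbf{y})=\mathbf{G}(\mathbf{x},\mathbf{a}_t,\mathbf{y})\rho_c$ and likewise for $\mathbf{C}$ (and the value matrix), and then push the permutation through to the representative's reward and to the next-state update, with the stochastic transition handled as an equality in distribution. The only cosmetic difference is in the transition step, where the paper assumes the state change depends solely on the cost vector $\mathbf{c}_t=\mathrm{diag}(\mathbf{C}^\top\mathbf{J})$ and invokes the permutation/diagonal lemma, whereas you posit an index-independent per-advertiser update reading its own allocation and cost columns plus exogenous noise --- both rest on exactly the same anonymity premise.
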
 
\begin{proof}
Essentially, the characteristics of the auction system endow the POMDP with the PE and PI properties. 
Consider the bidding process between time step $t$ and $t+1$, as shown in Fig. \ref{fig:ads_sys}. Specifically, there are $N-1$ background advertisers and one representative advertiser bidding for $M\in\mathbb{N}_+$ impression opportunities, where $x_i\in\mathcal{X}$ denotes the contextual feature of advertiser $i$, and $y_i\in\mathcal{Y}$ denotes the contextual feature of the $j$-th impression opportunity, $j\in [M]$. Here $\mathcal{X}$ and $\mathcal{Y}$ denote the contextual feature space of the advertiser and the impression opportunity, respectively. The auction system can be viewed as two mechanisms, including the allocation mechanism $g:\mathcal{X}^N\times\mathcal{A}^N\times\mathcal{Y}^M\rightarrow [0,1]^{M\times N}$, and the pricing mechanism $c:\mathcal{X}^N\times\mathcal{A}^N\times\mathcal{Y}^M\rightarrow \mathbb{R}_+^{M\times N}$. The allocation mechanism $g$ determines the distribution of the impression opportunities among the advertisers and is usually composed of cascaded stages, including match stage, pre-ranking stage, ranking stage, etc., and the pricing mechanism $c$ decides how much each advertiser should pay.
Let $\mathbf{G}(\mathbf{x},\mathbf{a}_t,\mathbf{y})$ and $\mathbf{C}(\mathbf{x},\mathbf{a}_t,\mathbf{y})$ be the outputs of the allocation mechanism $g$ and the pricing mechanism $c$, respectively. Denote $g_{ji}$ and $c_{ji}$ as the element in the $j$-th row and the $i$-th column of $\mathbf{G}$ and $\mathbf{C}$, respectively, 
meaning the allocation decision and the corresponding cost of impression opportunity $j$ concerning advertiser $i$.  
Note that $\sum_i g_{ji}=1$.
An important characteristic of the allocation mechanism $g$ is that it is permutation equivariant. Specifically, let the joint contextual feature of $N$ advertisers and $M$ impression opportunities be $\mathbf{x}\triangleq(x_{1:N})$ and $\mathbf{y}\triangleq(y_{1:M})$, respectively, and for any permutation operator $\rho$ we have:
\begin{align}
		\label{equ:pe_g}
	\mathbf{G}(\rho\mathbf{x}, \rho\mathbf{a}_t,\mathbf{y})=\mathbf{G}(\mathbf{x},\mathbf{a}_t,\mathbf{y})\rho_c.
\end{align}
The reason why the allocation mechanism $g$ is permutation equivariant is that the order of the advertiser sequence is not important and cannot change the bidding results. 
Similarly, the pricing function $c$ is also permutation equivariant, i.e.,
\begin{align}
		\label{equ:pe_c}
		\mathbf{C}(\rho\mathbf{x}, \rho\mathbf{a}_t,\mathbf{y})=\mathbf{C}(\mathbf{x},\mathbf{a}_t,\mathbf{y})\rho_c.
\end{align}
 Let  $\mathbf{V}(\mathbf{x},\mathbf{y})\in\mathbb{R}_+^{M\times N}$ be the value matrix, where the element in the $j$-th row and the $i$-th column $v_{ji}\in\mathbb{R}_+$ is the value of impression opportunity $j$ with respect to advertiser $i$. We claim that the value matrix also satisfies the permutation equivariant property, i.e.,
 \begin{align}
 	\label{equ:pe_v}
 	\mathbf{V}(\rho\mathbf{x}, \mathbf{y})=\mathbf{V}(\mathbf{x},\mathbf{y})\rho_c.
 \end{align}
The reason is also that the order of the advertiser sequence is not important and cannot change the value evaluation of the impression opportunities. 
During the bidding process, there are only two key factors each advertiser $i$ mainly focuses on, i.e., the value of the impression opportunities it can gain, $\sum_{j\in [M]}g_{ji}v_{ji}$, and the cost it should pay, $\sum_{j\in [M]}c_{ji}$. The former one is exactly the reward $r_{i,t}$ advertiser $i$ receives, and the joint reward can be expressed as  $\mathbf{r}_t(\mathbf{s}_t,\mathbf{a}_t)=\mathrm{diag}(\mathbf{G}(\mathbf{x},\mathbf{a}_t,\mathbf{y})^\top\mathbf{V}(\mathbf{x},\mathbf{y}))$. 
Note that $x_i$ is contained in the local state $s_{i,t}$.
Hence, 
based on Lemma \ref{lemma:equv} and the permutation equivariant property \eqref{equ:pe_g} and \eqref{equ:pe_v}, we have
\begin{align}
	\label{equ:r_homo}
	\mathbf{r}_t(\rho\mathbf{s}_t,\rho\mathbf{a}_t)&=\mathrm{diag}\bigg(\mathbf{G}(\rho\mathbf{x},\rho\mathbf{a}_t,\mathbf{y})^\top\mathbf{V}(\rho\mathbf{x},\mathbf{y})\bigg)\notag\\
	&=\mathrm{diag}\bigg((\mathbf{G}(\mathbf{x},\mathbf{a}_t,\mathbf{y})\rho_c)^\top\mathbf{V}(\mathbf{x},\mathbf{y})\rho_c\bigg)\notag\\
	&=\mathrm{diag}\bigg(\rho_c^\top\mathbf{G}(\mathbf{x},\mathbf{a}_t,\mathbf{y})^\top\mathbf{V}(\mathbf{x},\mathbf{y})\rho_c\bigg)\notag\\
	&=\rho\mathrm{diag}\bigg(\mathbf{G}(\mathbf{x},\mathbf{a}_t,\mathbf{y})^\top\mathbf{V}(\mathbf{x},\mathbf{y})\bigg)\notag\\
	&=\rho 	\mathbf{r}_t(\mathbf{s}_t,\mathbf{a}_t).
\end{align}
As the reward of the representative advertiser is the $N$-th element in $\mathbf{r}_t(\mathbf{s}_t,\mathbf{a}_t)$, we have:
\begin{align}
\label{equ:PI}
R(\mathbf{s}_t,\mathbf{a}_t)=R(\rho\mathbf{s}_t,\rho\mathbf{a}_t).
\end{align}
Let $\mathbf{J}\in\mathbb{R}^{M\times M}$ be the matrix with all $1$ elements, and the joint cost can be expressed as $\mathbf{c}_t(\mathbf{s}_t,\mathbf{a}_t)=\mathrm{diag}(\mathbf{C}(\mathbf{x},\mathbf{a}_t,\mathbf{y})^\top\mathbf{J})$. Note that the change in state depends only on the cost spent between time step $t$ and $t+1$, i.e., 
$P(\mathbf{s}_{t+1}|\mathbf{s}_t,\mathbf{a}_t)=P(\mathbf{c}_t(\mathbf{s}_t,\mathbf{a}_t))$, where $\mathbf{c}_t$ is the corresponding cost during the state transition. Therefore, we have:
\begin{align}
	\label{equ:p_homo}
	P\bigg(\rho\mathbf{s}_{t+1}|\rho\mathbf{s}_t,\rho\mathbf{a}_t\bigg)&=P\bigg(\mathbf{c}_t(\rho\mathbf{s}_t,\rho\mathbf{a}_t)\bigg)\notag\\
	&=P\bigg(\mathrm{diag}\bigg(\mathbf{C}(\rho\mathbf{x},\rho\mathbf{a}_t,\mathbf{y})^\top\mathbf{J}\bigg)\bigg)\notag\\
	&=P\bigg(\mathrm{diag}\bigg(\rho_c^\top\mathbf{C}(\mathbf{x},\mathbf{a}_t,\mathbf{y})^\top\mathbf{J}\rho_c\bigg)\bigg)\notag\\
	&=P\bigg(\rho\mathrm{diag}\bigg(\mathbf{C}(\mathbf{x},\mathbf{a}_t,\mathbf{y})^\top\mathbf{J}\bigg)\bigg)\notag\\
	&=P\bigg(\rho\mathbf{c}_t(\mathbf{s}_t,\mathbf{a}_t) \bigg)\notag\\
	&=P\bigg(\mathbf{c}_t(\mathbf{s}_t,\mathbf{a}_t) \bigg)\notag\\
	&=P\bigg(\mathbf{s}_{t+1}|\mathbf{s}_t,\mathbf{a}_t\bigg).
\end{align}
Note that we use the properties $\mathbf{J}=\mathbf{J}\rho_c$ and $P(\rho\mathbf{c}_t(\mathbf{s}_t,\mathbf{a}_t))=P(\mathbf{c}_t(\mathbf{s}_t,\mathbf{a}_t))$.
So far, \eqref{equ:PI} and \eqref{equ:p_homo} have concluded the proof. 
\end{proof}

\subsection{Proof of Proposition \ref{prop:generalization_bound}}
\label{app:proof_prop_generalization_bound}
\begin{proposition}[Generalization Bound, Proposition \ref{prop:generalization_bound} in the main paper]
    Let $\mathcal{M}'$ be the function class of $\hat{M}'$, with probability at least $\delta$, we have:
    \begin{align}
        \Delta_G\le 2\inf_{r>0}\bigg\{\sqrt{\frac{2\ln\mathcal{N}_\infty(\mathcal{M}',r)}{|\mathcal{S}|}}+r\bigg\}+C_{\delta, \mathcal{S}},
    \end{align}
    where $C_{\delta, \mathcal{S}}\triangleq4\sqrt{\frac{2\ln (4/\delta)}{|\mathcal{S}|}}$, and $\mathcal{N}_\infty(\mathcal{M}',r)$ is the $r$-covering number of $\mathcal{M}'$ defined in Definition \ref{def:r_cover}.
\end{proposition}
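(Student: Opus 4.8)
The plan is to obtain the bound as a direct instantiation of Lemma~\ref{lemma:generalization} (Lemma~B.2 of \cite{duan2023equivariant}), with the per-sample loss taken to be the MAE loss $L$ already used to define $\Delta_G$. First I would identify the generic sample $u$ of Lemma~\ref{lemma:generalization} with a transition tuple $u\triangleq(\mathbf{s}_t,\mathbf{a}_t,\mathbf{u}_t)$ drawn from the true transition distribution $\mathcal{D}$, let $\mathcal{S}$ play the role of the training set with $m=|\mathcal{S}|$, and set $l(\hat{M}',u)\triangleq L(\hat{M}',\mathbf{s}_t,\mathbf{a}_t)=\|\mathbf{u}_t-\hat{\mathbf{u}}_t\|_1$, where $\hat{\mathbf{u}}_t=\hat{M}'(\mathbf{s}_t,\mathbf{a}_t)$ is the model's stacked prediction of the next local states and the reward and $\mathbf{u}_t$ is the corresponding ground truth. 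Under this identification the quantity $\Delta$ in Lemma~\ref{lemma:generalization} coincides with $\Delta_G(\hat{M}')$ and the term $C_{\delta,\mathcal{S}}$ matches verbatim, so the proof reduces to checking the two hypotheses of that lemma and reading off the constant $K$.

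Next I would verify the hypotheses. For the boundedness requirement $|l(\cdot)|\le c$, I would observe that in the auto-bidding POMDP the local states, the bids, and the reward all take values in bounded ranges (advertisers have finite budgets and the per-step impression value is bounded), and the outputs of the candidate models in $\mathcal{M}'$ may likewise be taken bounded, e.g. by clipping, so that $\|\mathbf{u}_t-\hat{\mathbf{u}}_t\|_1\le c$ for some constant $c>0$; this is the only mild standing assumption the argument needs. For the Lipschitz requirement, I would fix any $\hat{M}_1,\hat{M}_2\in\mathcal{M}'$ and any tuple $u$ and apply the reverse triangle inequality for the $1$-norm, followed by the definition of the function $l_\infty$-norm, to get
\begin{align}
|l(\hat{M}_1,u)-l(\hat{M}_2,u)|\le\|\hat{M}_1(\mathbf{s}_t,\mathbf{a}_t)-\hat{M}_2(\mathbf{s}_t,\mathbf{a}_t)\|_1\le\|\hat{M}_1-\hat{M}_2\|_\infty,
\end{align}
so the Lipschitz constant is $K=1$.

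Finally I would invoke Lemma~\ref{lemma:generalization} with $K=1$, which under the identifications above immediately yields
\begin{align}
\Delta_G(\hat{M}')\le 2\inf_{r>0}\bigg\{\sqrt{\frac{2\ln\mathcal{N}_\infty(\mathcal{M}',r)}{|\mathcal{S}|}}+r\bigg\}+C_{\delta,\mathcal{S}},
\end{align}
which is exactly the claimed bound. Beyond this substitution the only genuine content is the hypothesis check: establishing the uniform boundedness of the MAE loss (which requires boundedness of the state, action and reward spaces together with a bounded output range for the models in $\mathcal{M}'$) and confirming that the $l_\infty$-norm controlling the Lipschitz constant is the same norm that defines the covering number $\mathcal{N}_\infty(\mathcal{M}',r)$ appearing in the conclusion. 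I expect essentially all of the (modest) difficulty to lie there; once these points are settled the bound is immediate, and it is this bound that Proposition~\ref{prop:small_cover_number} then exploits, via the non-expansiveness of the orbit averaging operator, to obtain the better generalization of the PE environment model.
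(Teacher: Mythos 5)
Your proposal is correct and follows essentially the same route as the paper's proof: establish the Lipschitz property $|L(\hat{M}_1,\mathbf{s}_t,\mathbf{a}_t)-L(\hat{M}_2,\mathbf{s}_t,\mathbf{a}_t)|\le\|\hat{M}_1-\hat{M}_2\|_\infty$ via the triangle inequality for the $1$-norm and then invoke Lemma~\ref{lemma:generalization} with $K=1$. Your explicit check of the boundedness hypothesis $|l(\cdot)|\le c$ is a detail the paper leaves implicit, but it does not change the argument.
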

\begin{proof}
    We first show that: for any two functions $\hat{M}_1$ and $\hat{M}_2$, the loss function satisfies:
    \begin{align}
        |L(\hat{M}_1, \mathbf{s}_t,\mathbf{a}_t)-L(\hat{M}_2, \mathbf{s}_t,\mathbf{a}_t)|\le \|\hat{M}_1-\hat{M}_2\|_\infty.
    \end{align} 
    Without the loss of generality, let $L(\hat{M}_1, \mathbf{s}_t,\mathbf{a}_t)>L(\hat{M}_2, \mathbf{s}_t,\mathbf{a}_t)$.
    Then, we have:
    \begin{align}
        &\;\;\;\;|L(\hat{M}_1, \mathbf{s}_t,\mathbf{a}_t)-L(\hat{M}_2, \mathbf{s}_t,\mathbf{a}_t)|\notag\\
        &=L(\hat{M}_1, \mathbf{s}_t,\mathbf{a}_t)-L(\hat{M}_2, \mathbf{s}_t,\mathbf{a}_t)\notag\\
        &=\|\mathbf{u}_t-\mathbf{u}_{1,t}\|_1-\|\mathbf{u}_t-\mathbf{u}_{2,t}\|_2\notag\\
        &=\sum_i |u_{i,t}-u_{i,1,t}|-\sum_i |u_{i,t}-u_{i,2,t}|\notag\\
        &=\sum_i |u_{i,t}-u_{i,1,t}+u_{i,2,t}-u_{i,2,t}|-\sum_i |u_{i,t}-u_{i,2,t}|\notag\\
        &\le\sum_i\bigg[|u_{i,t}-u_{i,2,t}|+|u_{i,2,t}-u_{i,1,t}|-|u_{i,t}-u_{i,2,t}|\bigg]\notag\\
        &=\sum_i|u_{i,1,t}-u_{i,2,t}|\notag\\
        &=\|\mathbf{u}_{1,t}-\mathbf{u}_{2,t}\|_1\notag\\
        &=\|\hat{M}_1(\mathbf{s}_t,\mathbf{a}_t)-\hat{M}_2(\mathbf{s}_t,\mathbf{a}_t)\|_1\notag\\
        &\le \max_{\mathbf{s},\mathbf{a}}\|\hat{M}_1(\mathbf{s},\mathbf{a})-\hat{M}_2(\mathbf{s},\mathbf{a})\|_1\notag\\
        &=\|\hat{M}_1-\hat{M}_2\|_\infty.
    \end{align}
Hence, we can leverage Lemma \ref{lemma:generalization} with $K=1$. This finishes the proof.
\end{proof}
\subsection{Proof of Proposition \ref{prop:small_cover_number}}
\label{app:proof_prop_cover_number}

\begin{proposition}[Smaller $r$-covering Number]
Let $\mathcal{M}''\triangleq\mathcal{Q}\mathcal{M}'$.
Then we have $\mathcal{N}_\infty(\mathcal{M}'',r)\le \mathcal{N}_\infty(\mathcal{M}',r)$.
\end{proposition}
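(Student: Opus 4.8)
The plan is to reduce the claim to the covering-number lemma of \cite{duan2023equivariant} (Lemma~\ref{lemma:orbit_averaging}), whose only hypothesis is that the orbit averaging operator $\mathcal{Q}$ is non-expansive in the $l_\infty$ metric on the function class in question. Hence essentially all the work is to check that hypothesis for $\mathcal{M}'$: namely that $l_\infty(\mathcal{Q}M_1,\mathcal{Q}M_2)\le l_\infty(M_1,M_2)$ for all $M_1,M_2\in\mathcal{M}'$. Once this is established, applying Lemma~\ref{lemma:orbit_averaging} with $\mathcal{M}=\mathcal{M}'$ immediately gives $\mathcal{N}_\infty(\mathcal{Q}\mathcal{M}',r)\le\mathcal{N}_\infty(\mathcal{M}',r)$, i.e.\ $\mathcal{N}_\infty(\mathcal{M}'',r)\le\mathcal{N}_\infty(\mathcal{M}',r)$, for every $r>0$.

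To verify the non-expansion property, I would fix an arbitrary input $\mathbf{x}$, expand $\mathcal{Q}M_j(\mathbf{x})=\frac{1}{N!}\sum_{\rho\in\Omega_N}\rho^{-1}\big(M_j(\rho\mathbf{x})\big)$, subtract, and apply the triangle inequality to the average, obtaining $\|\mathcal{Q}M_1(\mathbf{x})-\mathcal{Q}M_2(\mathbf{x})\|_1\le\frac{1}{N!}\sum_{\rho\in\Omega_N}\|\rho^{-1}\big(M_1(\rho\mathbf{x})-M_2(\rho\mathbf{x})\big)\|_1$. The key observation is that each permutation operator merely reorders the coordinates of a vector and is therefore an isometry for the $1$-norm, so $\|\rho^{-1}\mathbf{z}\|_1=\|\mathbf{z}\|_1$; dropping $\rho^{-1}$ in every summand leaves $\frac{1}{N!}\sum_{\rho}\|M_1(\rho\mathbf{x})-M_2(\rho\mathbf{x})\|_1$. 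Each term is bounded by $\max_{\mathbf{x}'}\|M_1(\mathbf{x}')-M_2(\mathbf{x}')\|_1=l_\infty(M_1,M_2)$, a bound uniform in $\rho$, so the whole average is $\le l_\infty(M_1,M_2)$; taking the maximum over $\mathbf{x}$ on the left side yields $l_\infty(\mathcal{Q}M_1,\mathcal{Q}M_2)\le l_\infty(M_1,M_2)$, as required.

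There is no serious obstacle here; the only point deserving care is that the permutation acting on the \emph{output} vector is the same coordinate-reshuffling isometry that permits $\rho^{-1}$ to be discarded under the $1$-norm, and that the resulting estimate is uniform in $\rho$ so it survives averaging. For completeness one could also inline the covering-number step rather than cite Lemma~\ref{lemma:orbit_averaging}: take a minimal $r$-cover $\mathcal{M}'_r$ of $\mathcal{M}'$ under $l_\infty$; by the non-expansion property $\mathcal{Q}\mathcal{M}'_r$ is an $r$-cover of $\mathcal{Q}\mathcal{M}'=\mathcal{M}''$, and $|\mathcal{Q}\mathcal{M}'_r|\le|\mathcal{M}'_r|=\mathcal{N}_\infty(\mathcal{M}',r)$, so the covering number cannot increase, which is exactly the claim.
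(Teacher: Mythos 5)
Your proposal is correct and follows essentially the same route as the paper: it invokes the covering-number lemma from \cite{duan2023equivariant} (the paper's Lemma on orbit averaging and $r$-covers) and verifies its non-expansion hypothesis by expanding $\mathcal{Q}$, applying the triangle inequality, using that permutations are $1$-norm isometries to discard $\rho^{-1}$, and bounding each summand uniformly by $l_\infty(M_1,M_2)$, exactly as in the paper's proof. Your optional inlined covering argument is likewise the same as the cited lemma's own proof, so nothing is missing.
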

\begin{proof}
    Consider two random functions $\hat{M}'_1,\hat{M}'_2\in\mathcal{M}'$, and let $\hat{M}''_1\triangleq\mathcal{Q}\hat{M}'_1\in\mathcal{M}''$ and $\hat{M}''_2\triangleq\mathcal{Q}\hat{M}'_2\in\mathcal{M}''$.
    With Lemma \ref{lemma:orbit_averaging}, we only need to prove $l_\infty(\hat{M}''_1, \hat{M}''_2)\le l_\infty(\hat{M}'_1, \hat{M}'_2)$. Specifically, 
    we have:
    \begin{align}
        l_\infty(\hat{M}''_1, \hat{M}''_2)&=\max_{\mathbf{s},\mathbf{a}}\|\hat{M}''_1(\mathbf{s},\mathbf{a})-\hat{M}''_2(\mathbf{s},\mathbf{a})\|_1\notag\\
&=\max_{\mathbf{s},\mathbf{a}}\|\mathcal{Q}\hat{M}'_1(\mathbf{s},\mathbf{a})-\mathcal{Q}\hat{M}'_2(\mathbf{s},\mathbf{a})\|_1\notag\\
        &=\max_{\mathbf{s},\mathbf{a}}\bigg\|\frac{1}{N!}\sum_{\rho\in\Omega_N}\rho^{-1}\bigg(\hat{M}'_1(\rho\mathbf{s},\rho\mathbf{a})-\hat{M}'_2(\rho\mathbf{s},\rho\mathbf{a})\bigg)\bigg\|_1\notag\\
        &\le \max_{\mathbf{s},\mathbf{a}}\frac{1}{N!}\sum_{\rho\in\Omega_N}\bigg\|\bigg(\hat{M}'_1(\rho\mathbf{s},\rho\mathbf{a})-\hat{M}'_2(\rho\mathbf{s},\rho\mathbf{a})\bigg)\bigg\|_1\notag\\
        &=\frac{1}{N!}\sum_{\rho\in\Omega_N}\max_{\mathbf{s},\mathbf{a}}\bigg\|\bigg(\hat{M}'_1(\rho\mathbf{s},\rho\mathbf{a})-\hat{M}'_2(\rho\mathbf{s},\rho\mathbf{a})\bigg)\bigg\|_1\notag\\
        &=\frac{1}{N!}\sum_{\rho\in\Omega_N}\max_{\mathbf{s},\mathbf{a}}\bigg\|\bigg(\hat{M}'_1(\mathbf{s},\mathbf{a})-\hat{M}'_2(\mathbf{s},\mathbf{a})\bigg)\bigg\|_1\notag\\
        &=\frac{1}{N!}\sum_{\rho\in\Omega_N} l_\infty(\hat{M}'_1, \hat{M}'_2)\notag\\
        &=l_\infty(\hat{M}'_1, \hat{M}'_2).
    \end{align}
    This concludes the proof.
\end{proof}

\subsection{Proof of Proposition \ref{prop:lower_bound}}
\label{app:proof_lower_bound}

\begin{proposition}[Lower Bound Improvement, Proposition \ref{prop:lower_bound} in the main paper]
    For any policy $\pi$, its performance in the online advertising system $M$, denoted as $\eta_M(\pi)$, is lower bounded by its performance in a fitted environment model $\hat{M}$ with the penalized predicted reward $\tilde{r}_t$, denoted as $\eta_{\hat{M}}(\pi)$, i.e., $\eta_M(\pi)\ge\eta_{\hat{M}}(\pi)$.
\end{proposition}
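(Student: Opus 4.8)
The plan is to adapt the telescoping-error argument of \cite{yu2020mopo} to our setting, in which the environment model $\hat{M}$ differs from the true system $M$ in \emph{both} the dynamics ($\hat{P}$ versus $P$) and the reward ($\hat{r}$ versus $R$), and additionally carries the explicit penalty $-\lambda\|\hat{\sigma}_t\|_F$. First I would introduce two auxiliary returns: $\eta_{\hat{M},R}(\pi)$, the return of $\pi$ under the model dynamics $\hat{P}$ but evaluated with the \emph{true} reward $R$, and $\eta_{\hat{M},\hat{r}}(\pi)$, the return under $\hat{P}$ with the \emph{unpenalized} predicted reward $\hat{r}$. Telescoping through these auxiliary quantities gives the decomposition
\begin{align}
\eta_M(\pi)-\eta_{\hat{M}}(\pi)={}&\underbrace{\big(\eta_M(\pi)-\eta_{\hat{M},R}(\pi)\big)}_{\text{dynamics gap}}\notag\\
&+\underbrace{\big(\eta_{\hat{M},R}(\pi)-\eta_{\hat{M},\hat{r}}(\pi)\big)}_{\text{reward gap}}+\underbrace{\big(\eta_{\hat{M},\hat{r}}(\pi)-\eta_{\hat{M}}(\pi)\big)}_{\text{penalty gap}}.
\end{align}

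Next I would handle the three terms separately. For the dynamics gap, $M$ and the auxiliary model $(\hat{P},R)$ share the reward $R$, so Lemma \ref{lemma:telescope} applies verbatim and gives $\eta_{\hat{M},R}(\pi)-\eta_M(\pi)=\gamma\,\mathbb{E}_{(s,a)\sim d^\pi_{\hat{P}}}[G^\pi_{\hat{M}}(s,a)]$ with $G^\pi_{\hat{M}}(s,a)=\sum_{s'}(\hat{P}(s,a)-P(s,a))V^\pi_M(s')$; Hölder's inequality then bounds $|G^\pi_{\hat{M}}(s,a)|\le\|\hat{P}(\cdot|s,a)-P(\cdot|s,a)\|_1\,\|V^\pi_M\|_\infty$. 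For the reward gap, both returns use the same dynamics $\hat{P}$, so it telescopes to the occupancy-weighted sum over the horizon of the per-step reward errors, bounded pointwise by $|R(s,a)-\hat{r}(s,a)|$. The penalty gap equals, by the definition $\tilde{r}_t=\hat{r}_t-\lambda\|\hat{\sigma}_t\|_F$, exactly the occupancy-weighted sum of $\lambda\|\hat{\sigma}(s,a)\|_F$, which is nonnegative.

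Collecting the three estimates, $\eta_M(\pi)-\eta_{\hat{M}}(\pi)$ is at least the expectation under $d^\pi_{\hat{P}}$ of a per-state-action quantity of the form $\lambda\|\hat{\sigma}(s,a)\|_F$ minus a horizon-weighted combination of $\|\hat{P}(\cdot|s,a)-P(\cdot|s,a)\|_1\|V^\pi_M\|_\infty$ and $|R(s,a)-\hat{r}(s,a)|$, so it suffices to make this integrand nonnegative pointwise. This is where the single substantive hypothesis enters, and I expect it to be the main obstacle: following \cite{yu2020mopo}, one posits that the ensemble dispersion $\|\hat{\sigma}(s,a)\|_F$ is an \emph{admissible error estimator} --- up to a constant $c$ depending only on the horizon $T$ and the reward range $r_{\max}$ (note $\|V^\pi_M\|_\infty\le T r_{\max}$ when $\gamma=1$), it simultaneously upper-bounds $\|\hat{P}(\cdot|s,a)-P(\cdot|s,a)\|_1$ and $|R(s,a)-\hat{r}(s,a)|$ on the reachable set of $d^\pi_{\hat{P}}$. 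Granting this, any $\lambda\ge c$ renders the integrand nonnegative at every reachable $(s,a)$, and hence $\eta_M(\pi)\ge\eta_{\hat{M}}(\pi)$. I would state the admissibility of $\hat{\sigma}$ explicitly as a standing assumption (as MOPO does), and remark that it is precisely the property the ensemble variance is designed to approximate, so that the proposition is best read as a guarantee for a suitably calibrated penalty $\lambda$ on a well-fitted $\hat{M}$, rather than as an unconditional statement.
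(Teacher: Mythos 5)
Your argument is correct and belongs to the same family as the paper's proof: both rest on the telescoping identity of Lemma \ref{lemma:telescope} (Lemma 4.3 of \cite{luo2018algorithmic}) plus the MOPO-style assumption that the ensemble dispersion is an admissible error estimator dominating the model-error term. The difference is in the decomposition and in how explicitly the hypotheses are stated. The paper applies the telescoping lemma once, writes $\eta_M(\pi)=\mathbb{E}_{(\mathbf{s},\mathbf{a})\sim d^\pi_{\hat{M}}}[R(\mathbf{s},\mathbf{a})-\gamma G^\pi_{\hat{M}}(\mathbf{s},\mathbf{a})]$, bounds $\gamma|G^\pi_{\hat{M}}|$ by $\lambda u(\mathbf{s},\mathbf{a})$ using the cited property $u\ge d_\mathcal{F}\ge|G^\pi_{\hat{M}}|$, and then identifies $R-\lambda u$ with $\tilde{r}_t=\hat{r}_t-\lambda\|\hat{\sigma}_t\|_F$; this last identification silently assumes both that the learned reward coincides with $R$ (or that its error is absorbed into the penalty) and that $u$ can be taken to be $\|\hat{\sigma}_t\|_F$ with the chosen $\lambda$. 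Your three-term split into dynamics gap, reward gap, and penalty gap makes exactly these hidden steps visible: the reward gap you isolate is the term the paper's one-line identification sweeps under the rug, and your standing admissibility assumption with the calibration condition $\lambda\ge c$ is the precise form of what the paper invokes only by citation to \cite{yu2020mopo}. What the paper's route buys is brevity; what yours buys is a transparent accounting of where reward-model error and penalty calibration enter, and a correct reading of the proposition as conditional on a well-calibrated $\lambda$ rather than unconditional --- which is indeed the honest scope of the result as proved in the paper as well.
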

\begin{proof}
    With Lemma \ref{lemma:telescope}, we can get:
    \begin{align}
    \label{equ:eta_M}
\eta_M(\pi)&=\mathbb{E}_{(\mathbf{s},\mathbf{a})\sim d^\pi_{\hat{M}}}[R(\mathbf{s},\mathbf{a})-\gamma G^\pi_{\hat{M}}(\mathbf{s},\mathbf{a})]\notag\\
&\ge \mathbb{E}_{(\mathbf{s},\mathbf{a})\sim d^\pi_{\hat{M}}}[R(\mathbf{s},\mathbf{a})-\gamma |G^\pi_{\hat{M}}(\mathbf{s},\mathbf{a})|]\notag\\
&\ge \mathbb{E}_{(\mathbf{s},\mathbf{a})\sim d^\pi_{\hat{M}}}[R(\mathbf{s},\mathbf{a})-\lambda u(\mathbf{s},\mathbf{a})]\notag\\
&=\mathbb{E}_{(\mathbf{s},\mathbf{a})\sim d^\pi_{\hat{M}}}[\tilde{r}_t]\notag\\
&=\eta_{\hat{M}}(\pi).
    \end{align}
    Note that we utilize a property \cite{yu2020mopo} in the above derivation, i.e., there exists a function $u(\mathbf{s},\mathbf{a})$ such that 
    \begin{align}
    u(\mathbf{s},\mathbf{a})\ge d_\mathcal{F}\triangleq \sup{f\in\mathcal{F}}|\mathbb{E}_{\mathbf{s}'\sim\hat{P}}f(\mathbf{s'})-\mathbb{E}_{\mathbf{s}'\sim{P}}f(\mathbf{s'})| \ge |G^\pi_{\hat{M}}|,
    \end{align}
    where $\mathcal{F}$ denotes the function class of the value function $V^\pi_M(\mathbf{s})$. Hence, \eqref{equ:eta_M} concludes the proof.
\end{proof}

\end{document}